\def\Lang{\ensuremath{\mathcal L}\xspace}
\def\T{\ensuremath{T}\xspace}
\def\U{U}
\newcommand{\ClaimsN}[2][n]{\ensuremath{\Claims_{#1}\ifstrempty{#2}{}{\!\left(\text{#2}\right)}}}
\newcommand{\ConN}[2][n]{\ensuremath{\Con_{#1}\ifstrempty{#2}{}{\!\left(\text{#2}\right)}}}
\newtheorem{thm}{Theorem}
\newtheorem{lem}{Lemma}
\title{Inductive Coherence}
\author[1,2]{Scott Garrabrant}
\author[1]{Benya Fallenstein}
\author[1,3]{Abram Demski}
\author[1]{Nate Soares}
\affil[1]{Machine Intelligence Research Institute}
\affil[2]{University of California, Los Angeles}
\affil[3]{University of Southern California}
\begin{document}

\maketitle

\begin{abstract}
While probability theory is normally applied to external environments, there has been some recent interest in probabilistic modeling of the outputs of computations that are too expensive to run. Since mathematical logic is a powerful tool for reasoning about computer programs, we consider this problem from the perspective of integrating probability and logic. Recent work on assigning probabilities to mathematical statements has used the concept of \emph{coherent} distributions, which satisfy logical constraints such as the probability of a sentence and its negation summing to one. Although there are algorithms which converge to a coherent probability distribution in the limit, this yields only weak guarantees about finite approximations of these distributions. In our setting, this is a significant limitation: Coherent distributions assign probability one to all statements provable in a specific logical theory, such as Peano Arithmetic, which can prove what the output of any terminating computation is; thus, a coherent distribution must assign probability one to the output of any terminating computation. To model uncertainty about computations, we propose to work with \emph{approximations} to coherent distributions. We introduce \emph{inductive coherence}, a strengthening of coherence that provides appropriate constraints on finite approximations, and propose an algorithm which satisfies this criterion.
\end{abstract}

\section{Introduction}

Recently there has been some interest in the problem of assigning probabilities to the outputs of computations which are too expensive to run. For example, \citet{Hennig:2015:Probabilistic} call for the development of probabilistic numerical methods that are uncertain about their calculations; \citet{Hay:2011} study metareasoning procedures for controlling Monte Carlo simulations by estimating which simulations are likely to be effective; and \citet{Rainforth:2015} give methods for estimating probabilistic program variables via Bayesian optimization.

Formal logic is a tool that is particularly well-suited for making claims about computations, such as claims of the form ``this computation will halt and produce a number larger than~7" or ``this computation will use less memory than that computation" or ``this operating system's scheduler will not deadlock" \citep{Owre:1992,Klein:2009}.

When developing methods for handling uncertainty about the results of computations, then, it is natural to approach the problem from the angle of combining logic with probability theory, a topic which has received a lot of attention; see~\citet{Russell:2015b}. Since we are using logic to reason about mathematical facts, as opposed to reasoning about an uncertain external world, the approach taken by \citet{Gaifman:1964}, \citet{Demski:2012a}, \citet{Hutter:2013:unify}, and others is particularly relevant. This approach involves assigning probabilities to logical sentences in a formal theory powerful enough to represent claims about computations, such as Peano Arithmetic (\PA) or Zermelo-Fraenkel set theory (\ZFC). 

It is standard to study distributions $P$ of this form which are \emph{coherent}, in that they obey some simple logical constraints such as $P(\bot)=0$ and $P(\phi) + P(\lnot \phi) = 1$. Coherence can be thought of as a generalization of the notion of ``consistency" to probability distributions over sentences in logic. However, coherent distributions are uncomputable---they assign probability~1 to all theorems, and thus, they must assign probability~1 to the statement ``computation $f$ outputs $x$" whenever ${f()=x}$. For this reason, coherent probability distributions cannot represent a state of uncertainty about the outputs of computations. Rather, they represent a \emph{final} state of knowledge about logical facts that a reasoner might obtain if they could think forever \citep{Cozic:2006,Halpern:2011a}.

\citet{Demski:2012a} has proposed instead investigating algorithms that \emph{computably approximate} a coherent probability distribution, that is, algorithms that output a series of probabilities for each sentence such that those probabilities converge in the limit, and such that the distribution the approximation converges to is coherent. Unfortunately, coherence in the limit is too weak for our purposes: It does not impose constraints on any individual finite approximation of the coherent distribution. For example, an approximation to a coherent distribution might assign arbitrary probabilities to some theorem $\phi$ right up until it proves it, then assigns probability~1 thereafter. If $\phi$ was a statement about a computation, this means the approximation might only start assigning reasonable probabilities to $\phi$ after actually running the computation, which defeats the purpose. To get approximations of coherent distributions that assign reasonable probabilities to $\phi$ \emph{before} running the computation, we require some notion like coherence that we can impose on \emph{approximations} to the final distribution.

In this paper, we propose one such property, \emph{inductive coherence}, argue that it is a desirable generalization of coherence to approximations of coherent distributions, and show that a an inductively coherent approximation scheme exists. Roughly speaking, inductive coherence demands that, for any pattern in what is provable that can be identified by a Turing machine in polynomial time, the approximation must recognize and exploit that pattern ``not much later" than that Turing machine. We define this more formally in \Sec{icoherence}. In \Sec{prior} we propose a modification of Demski's algorithm~\shortcite{Demski:2012a} and show that it is inductively coherent. We conclude with a discussion of open problems and future work.

\paragraph{Other Related Work}

The study of assigning probabilities to sentences in mathematical logic dates back to \citet{Los:1955,Gaifman:1964}; see \citet{Hailperin:1984} for a more thorough history. Since then, the idea has been extended to, e.g., infinitary logic \citep{Scott:1966}, databases with uncertain data \citep{Suciu:2011}, and higher-order logic \citep{Hutter:2013:unify}. Computing probability assignments for logical statements can be seen as an extension of these approaches to the case where the reasoner's beliefs may be incoherent; see also the work of \citet{Muino:2011,Potyka:2015}, who study methods for measuring and handling inconsistency in knowledge bases.

Another method for unifying logic with probability is inductive logic programming \citep{Muggleton:1994,Nienhuys:1997}. For example, \citet{DeRaedt:2008} have described techniques for learning from proofs and program traces, and \citet{Richardson:2006} propose combining first-order logic and probabilistic graphical models using a method known as ``Markov logic networks." For a review of recent work, refer to \citet{Russell:2015b}.

Methods for reasoning probabilistically about the outputs of computations are a type learning scheme for probabilistic logic programs. For more on this subject, refer to \citet{Ng:1992,Muggleton:1996,Sato:1997,Poole:1997,Ngo:1997,Koller:1997,Lukasiewicz:1998,Kersting:2000}.

\section{Inductive Coherence} \label{sec:icoherence}

In this paper we study probability distributions over sentences of first-order logic, with the goal of describing computable distributions that assign reasonable probabilities to claims about computations that are too expensive to be run. (For example, imagine a reasoner that wants to know whether a particular $O(n^2)$ computation will outperform a particular $O(n \log n)$ computation on a particular dataset, without taking the time to run both computations.) We fix a theory $\T$ that is powerful enough to make claims about computations, such as~\PA. We let $\Lang$ denote the set of sentences is the language of $\T$.

``Coherence" can be seen as the natural generalization of consistency to probability distributions. It demands that theorems be certain, and the probabilities of mutually exclusive sentences add:

\begin{definition}[Coherence]
  A probability distribution over sentences in $\Lang$ is a function $P : \mathcal{L}\to[0,1]$ from sentences to probabilities. It is called \emph{coherent} with respect to the $\Lang$-theory~$\T$ if the following three conditions hold:
  \begin{enumerate}
    \item If $\phi$ is a theorem of $\T$, $P(\phi)=1$.
    \item If $\neg \left( \phi\wedge\psi \right)$ is a theorem of $\T$, $P(\phi \vee \psi)=P(\phi)+P(\psi)$.
  \end{enumerate}
\end{definition}

\noindent It is not hard to see that coherence ensures $P$ obeys other obvious logical constraints, such as $P(\phi) = 1 - P(\lnot \phi)$ and ${P(\phi \to \psi)=1} \implies {P(\psi) \ge P(\phi)}$. We say that coherence is a generalization of consistency because $P$ agrees with $\T$ on all theorems and contradictions, but can assign probabilities to undecidable sentences so long as those probabilities obey logical constraints. \citet{Gaifman:1964} has shown that any coherent $P$ is isomorphic to a measure $\mu$ on complete consistent extensions of $\T$---in other words, a coherent $P$ assigns probability to undecidable sentences in a fashion that corresponds to choosing some weighted mixture of all possible consistent ways to assign truth values to sentences.

Coherence is a \emph{very} strong constraint. For example, if $\T$ is \PA, then $P$ must assign probability~1 to all true statements about the behavior of computations. One way to think of a coherent distribution is that it represents the state of knowledge a reasoner could achieve after thinking \emph{forever}, after proving everything provable, refuting everything refutable, and assigning consistent probabilities to everything undecidable. It seems reasonable to ask that the \emph{limit} of a good reasoner's beliefs about logical statements should be coherent, but no computable distribution can achieve coherence. This motivates the study of computable \emph{approximation schemes} to coherent distributions, that is, algorithms which output a sequence of probabilities for each sentence such that the sequences converge, and the resulting probability assignments are coherent.

\begin{definition}[Approximation Scheme]
  An \emph{approximation scheme} is a Turing machine $M$ which takes as input a natural number $n$ and an encoding of a sentence $\phi$, and outputs a rational number $M_n(\phi)\in\mathbb{Q}\cap[0,1]$.
\end{definition}

\noindent We can think of $M$ as a machine that runs for longer and longer as $n$ gets larger, producing better and better estimates of the probability of $\phi$ each iteration. We concern ourselves with approximation schemes that converge to a coherent distribution in the limit.

\begin{definition}[Approximation]
  An approximation scheme $M$ \emph{approximates} $P:\Lang\rightarrow[0,1]$ if $$\lim_{n\rightarrow\infty} M_n(\phi)=P(\phi)$$
for all $\phi$. If there exists an~$M$ which approximates $P$, then we say that $P$ is ``approximable."
\end{definition}

Coherence in the limit does not impose strong constraints on an approximation scheme. Given a sentence $\phi$ describing a claim about a computation, $M$ could simply assign it arbitrary probabilities right up until it can run the computation, and then assign it probability~1 or~0 thereafter. $M$ could then be coherent in the limit, but it would never assign reasonable non-extreme probabilities to claims about computations too expensive to run. To get an $M$ that accurately estimates the results of computations before it can run them, we need to impose stronger constraints on the approximations.

Clearly, we cannot demand that the approximate distributions $M_n$ be completely coherent. We could try weakening coherence by demanding that $M_n$ assign probability~1 to all theorems \emph{that have a proof of length $n$ or less}, and indeed, this is the approach taken by \citet{Demski:2012a}. However, in our setting, this runs into the same problem as above: $M$ could still assign arbitrary probabilities to a decidable $\phi$ right up until it proves $\phi$ one way or the other, and might not have anything reasonable to say about the behavior of computations before running them. We require some other weakening of coherence that ensures $M$ places reasonable probabilities on $\phi$ \emph{before} it can run the computation.

Intuitively, we want $M$ to start noticing \emph{patterns} in computations before it's able to actually run them. For example, consider some computation $f$ that takes an input and either outputs 1, outputs 0, or loops. Let $\phi_n$ be the claim $f(n)=0$ and $\psi_n$ be the claim $f(n)=1$. We want $M$ to start assigning probabilities to $\phi_n$ and $\psi_n$ that sum to at most~1, and we want it to start doing so before it can simply compute $f(n)$.

More generally, we want $M$ to recognize patterns such as ``these claims are mutually exclusive" and ``these claims are equivalent." We formalize this idea as follows. Take any method for quickly (in polynomial time) identifying patterns of this form. A good approximation scheme should eventually be able to recognize and exploit that pattern, and ``not much slower" than the polynomial time method, even if the size of the shortest proof that the pattern holds grows superexponentially.

\begin{definition}[Quickly Computable]
  A sequence of sentences $\{\phi_n\}$ is called \emph{quickly computable} if there exists a Turing machine that outputs all the sentences in the sequence in order, and outputs $\phi_n$ by a time polynomial in $n$.
\end{definition}

\begin{definition}[Inductive Coherence] \label{def:ic}
  An approximation scheme $M$ is called \emph{inductively coherent} with respect to \T if it satisfies the following three conditions:
    \begin{enumerate}
    \item $\lim_{n\rightarrow\infty} M_n(\bot)=0$
    \item $\lim_{n\rightarrow\infty} M_n(\phi_n)$ converges whenever $\{\phi_n\}$ is quickly computable and $\phi_n\rightarrow \phi_{n+1}$ is provable in \T for all $n$.
    \item $\lim_{n\rightarrow\infty} M_n(\phi_n)+M_n(\psi_n)+M_n(\chi_n)=1$ whenever $\{\phi_n\}$, $\{\psi_n\}$, and $\{\chi_n\}$ are quickly computable, and for all $n$, it is provable in \T that $\{\phi_n,\psi_n,\chi_n\}$ is a partition of truth (meaning exactly one of them is true).
  \end{enumerate}
\end{definition}

\noindent To gain an intuition for how \Def{ic} guarantees that an inductively coherent $M$ recognizes patterns in quickly computable sequences ``not much slower" than the Turing machine $q$ that quickly computes them, imagine that $q(n)$  outputs pairs $(\phi_n, \psi_n)$ in time polynomial in $n$ such that each $\phi_n$ is provably equivalent to $\psi_n$, but the shortest proof of equivalence grows superexponentially fast in $n$. We want $M$ to eventually, on some iteration $n$ and thereafter, ``recognize the pattern" and start assigning roughly the same probabilities to $\phi_n$ and $\psi_n$. Note that this is a claim about the iteration $n$ by which $M$ must start recognizing the pattern identified by $q$, not a claim about the runtime of $M_n(\phi)$ on an individual $\phi$, which may be exponential or worse.

\begin{thm} \label{thm:equiv}
  If $M$ is inductively coherent, and $\{ \phi_n \}$ and $\{ \psi_n \}$ are quickly computable, and $\phi_n$ is provably equivalent to $\psi_n$ for all $n$, then $$\lim_{n \rightarrow \infty} M_n(\phi_n) - M_n(\psi_n) = 0.$$
\end{thm}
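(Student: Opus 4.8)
The plan is to derive everything from conditions~2 and~3 of \Def{ic}; condition~1, which controls $\bot$, will not be needed. The starting observation is a convenient special case of condition~2: if $\{\theta_n\}$ is quickly computable and each $\theta_n$ is \emph{refutable} (i.e., $\lnot\theta_n$ is a theorem of $\T$), then by ex falso $\theta_n\to\theta_{n+1}$ is a theorem of $\T$ for every $n$, so condition~2 guarantees that $\lim_n M_n(\theta_n)$ exists. Because $\phi_n$ is provably equivalent to $\psi_n$, the sentences $\phi_n\wedge\lnot\psi_n$ and $\lnot\phi_n\wedge\psi_n$ are refutable, and they are clearly quickly computable, being Boolean combinations of the quickly computable $\phi_n$ and $\psi_n$; hence the limits $a:=\lim_n M_n(\phi_n\wedge\lnot\psi_n)$ and $b:=\lim_n M_n(\lnot\phi_n\wedge\psi_n)$ both exist.

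The crux is to show $a=b$, and for this I would use an interleaving trick. Define $\zeta_n := \phi_n\wedge\lnot\psi_n$ when $n$ is even and $\zeta_n := \lnot\phi_n\wedge\psi_n$ when $n$ is odd. Then $\{\zeta_n\}$ is quickly computable and every $\zeta_n$ is refutable, so by condition~2 the limit $c:=\lim_n M_n(\zeta_n)$ exists. Reading off even indices, $\bigl(M_{2k}(\zeta_{2k})\bigr)_k$ equals $\bigl(M_{2k}(\phi_{2k}\wedge\lnot\psi_{2k})\bigr)_k$, which is simultaneously a subsequence of $\bigl(M_n(\zeta_n)\bigr)_n$ and the even-indexed subsequence of $\bigl(M_n(\phi_n\wedge\lnot\psi_n)\bigr)_n$; since both of these sequences converge, this forces $c=a$. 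Reading off odd indices in the same way gives $c=b$, and hence $a=b$.

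It remains to connect $M_n(\phi_n)$ and $M_n(\psi_n)$ with the cross terms, which is where condition~3 enters. For each $n$, propositional logic — hence $\T$ — proves that exactly one of $\phi_n$, $\lnot\phi_n\wedge\psi_n$, $\lnot\phi_n\wedge\lnot\psi_n$ is true, and likewise that exactly one of $\psi_n$, $\phi_n\wedge\lnot\psi_n$, $\lnot\phi_n\wedge\lnot\psi_n$ is true; all six sequences are quickly computable. Applying condition~3 to each of these two partitions and subtracting the resulting limit identities gives
$$\lim_n\bigl[M_n(\phi_n)-M_n(\psi_n)+M_n(\lnot\phi_n\wedge\psi_n)-M_n(\phi_n\wedge\lnot\psi_n)\bigr]=0.$$
Since $M_n(\lnot\phi_n\wedge\psi_n)\to b$ and $M_n(\phi_n\wedge\lnot\psi_n)\to a$, rearranging shows that $\lim_n\bigl(M_n(\phi_n)-M_n(\psi_n)\bigr)$ exists and equals $a-b$, which is $0$ by the previous paragraph.

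I expect the step $a=b$ to be the only place where care is genuinely required. Conditions~2 and~3 do not by themselves force $M_n$ to send a refutable sequence to $0$ — only $M_n(\bot)$ is pinned to $0$, and that by condition~1 — so one cannot simply argue that the individual cross terms vanish. The leverage comes instead from the fact that condition~2 forces the limits of $M_n$ along any two refutable quickly computable sequences to coincide once they have been shuffled together into a single quickly computable sequence; the parity interleaving above is arranged precisely so that the machine index $n$ and the sentence index stay in step on each of the two subsequences, which is what lets us identify $c$ with both $a$ and $b$.
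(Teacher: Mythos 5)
Your proof is correct, but it takes a genuinely different route from the paper's. The paper's argument is a two-liner using only property 3 of \Def{ic}: apply it to the partitions $\{\phi_n\},\{\lnot\phi_n\},\{\bot\}$ and $\{\psi_n\},\{\lnot\phi_n\},\{\bot\}$ --- the second triple is a partition of truth precisely because $\phi_n$ and $\psi_n$ are provably equivalent --- and subtract; since the $\lnot\phi_n$ and $\bot$ terms cancel pointwise, the difference $M_n(\phi_n)-M_n(\psi_n)$ tends to $1-1=0$ with no auxiliary limits to control. You instead route through the symmetric-difference sentences: your two partitions share only the term $\lnot\phi_n\wedge\lnot\psi_n$, so after subtracting you are left with the cross terms $M_n(\phi_n\wedge\lnot\psi_n)$ and $M_n(\lnot\phi_n\wedge\psi_n)$, which do not cancel pointwise; you then need property 2 (via ex falso, applied to refutable quickly computable sequences) to know both limits $a$ and $b$ exist, plus the parity-interleaving trick to force $a=c=b$. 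All of these steps check out: the Boolean combinations involved are quickly computable in the paper's sense, refutability does supply the provable implications $\theta_n\to\theta_{n+1}$ that property 2 demands, the subsequence argument is sound, and your remark that properties 2 and 3 alone cannot pin the cross terms to $0$ is accurate (the constant assignment $1/3$ satisfies both), so the detour through $a=b$ is genuinely needed on your route. What your version buys is independence from property 1 and a reusable observation that property 2 merges the limits along any two refutable quickly computable sequences once interleaved; what it costs is length and an extra convergence argument that the paper's choice of a \emph{shared} second sequence $\{\lnot\phi_n\}$ makes unnecessary --- indeed, in the paper's subtraction the $\bot$ terms also cancel pointwise, so even its nominal appeal to property 1 is inessential.
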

\begin{proof} If $\{\phi_n\}$ is quickly computable then so is $\{\lnot \phi_n\}$. The constant sequence $\{\bot\}$ is quickly computable. Apply property $3$ to the sequences $\{\phi_n\}$, $\{\lnot \phi_n\}$, and $\{\bot\}$, and then to $\{\psi_n\}$, $\{\lnot \phi_n\}$, $\{\bot\}$. Subtracting the results, we have $$\lim_{n\to\infty} M_n(\phi_n) + M_n(\lnot \phi_n) - M_n(\psi_n) - M_n(\lnot \phi_n) = 0,$$ because $\lim_{n\to\infty} M_n(\bot)=0$.
\end{proof}

Provable equivalence is only one type of pattern that an inductively coherent $M$ exploits before it can find the associated proofs. As a second example, if there is any quickly computable method for identifying sentences that are provable (even if the proofs are very long), then $M$ must recognize those patterns as well.

\begin{thm} \label{thm:converge}
  If $M$ is inductively coherent, for any quickly computable sequence $\{ \phi_n \}$ of theorems of $\T$ we have $$\lim_{n \rightarrow \infty} M_n(\phi_n) = 1.$$
\end{thm}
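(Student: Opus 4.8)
The plan is to reduce the statement to the three defining properties in \Def{ic} together with the already-established Theorem~\ref{thm:equiv}, following the same bookkeeping style as the proof of that theorem. First I would record where theoremhood of $\{\phi_n\}$ actually gets used: since each $\phi_n$ is a theorem of \T, the sentence $\lnot\phi_n$ is refutable in \T, so $\lnot\phi_n$ is provably equivalent to $\bot$ for every $n$. Both $\{\lnot\phi_n\}$ and the constant sequence $\{\bot\}$ are quickly computable---negation is a trivial syntactic operation that preserves the polynomial-time bound, and a constant sequence is obviously quickly computable---so Theorem~\ref{thm:equiv} applies and gives $\lim_{n\to\infty} M_n(\lnot\phi_n) - M_n(\bot) = 0$. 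Combined with property~1 this yields $\lim_{n\to\infty} M_n(\lnot\phi_n) = 0$.

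Next I would apply property~3 to the triple $\{\phi_n\}$, $\{\lnot\phi_n\}$, $\{\bot\}$, exactly as in the proof of Theorem~\ref{thm:equiv}. All three sequences are quickly computable ($\{\phi_n\}$ by hypothesis), and for each $n$ this triple is provably a partition of truth: $\phi_n\vee\lnot\phi_n$ is a tautology and $\lnot(\phi_n\wedge\lnot\phi_n)$, $\lnot(\phi_n\wedge\bot)$, $\lnot(\lnot\phi_n\wedge\bot)$ are all provable. (Notice that this step does not rely on $\phi_n$ being a theorem.) Property~3 then gives $\lim_{n\to\infty} M_n(\phi_n)+M_n(\lnot\phi_n)+M_n(\bot)=1$, and discarding the $M_n(\bot)$ term via property~1 leaves $\lim_{n\to\infty} M_n(\phi_n)+M_n(\lnot\phi_n)=1$. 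Substituting the limit from the previous paragraph gives $\lim_{n\to\infty} M_n(\phi_n)=1$, as required.

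I do not expect a genuine obstacle; the only points needing care are checking that quick computability survives prepending a negation symbol, so that $\{\lnot\phi_n\}$ is an admissible sequence, and verifying the ``provable partition of truth'' hypothesis for the triple above at every $n$. One could shorten the argument by skipping negations and applying property~3 directly to $\{\phi_n\}$, $\{\bot\}$, $\{\bot\}$: provability of $\phi_n$ and refutability of $\bot$ make even this degenerate triple a provable partition of truth, giving $\lim_{n\to\infty} M_n(\phi_n)+2\,M_n(\bot)=1$ and hence the conclusion via property~1. I would prefer the version above, in which all three parts of the partition are genuinely distinct sentences.
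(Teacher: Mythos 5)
Your proposal is correct, and in fact the one-line shortcut you relegate to the end is exactly the paper's proof: apply property~3 of \Def{ic} to the degenerate triple $\{\phi_n\}$, $\{\bot\}$, $\{\bot\}$, observing that provability of each $\phi_n$ is what makes this a provable partition of truth, and then discard the $M_n(\bot)$ terms by property~1. Your preferred main route differs in where theoremhood enters: you use the non-degenerate partition $\{\phi_n,\lnot\phi_n,\bot\}$, which is a provable partition for \emph{any} $\phi_n$, and feed the hypothesis in through Theorem~\ref{thm:equiv} via the provable equivalence $\lnot\phi_n \leftrightarrow \bot$, so that $\lim_n M_n(\lnot\phi_n)=0$. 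Both arguments are sound; the paper's is shorter and self-contained (it does not invoke Theorem~\ref{thm:equiv}), while yours isolates the use of theoremhood in a single equivalence step, at the cost of an extra lemma. Note also that your aesthetic preference for ``genuinely distinct'' parts is unnecessary in this framework---the paper itself freely uses repeated and degenerate entries, e.g.\ the partition $\{\phi,\bot,\bot\}$ in the proof that an inductively coherent $M$ is coherent in the limit---so nothing is lost by taking the degenerate triple directly.
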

\begin{proof}
  Apply property 3 to the sequences $\{ \phi_n \}$, $\{ \bot \}$, $\{ \bot \}$. Since each $\phi_n$ is provable, we have partitions of truth as desired, and $\lim_{n\rightarrow\infty} M_n(\bot)=0$.
\end{proof}

\noindent This implies that the criterion of inductive coherence captures some of what it means to reason well about computations that are too expensive to run. Recall that statements making true claims about the output of a computation are theorems, because the statement can be proven by providing an execution trace of the computation. \Thm{converge} implies that if there is a polynomial-time method for generating true statements about computations, then after some number of iterations $n$, $M_n$ incorporates that pattern into its probability assignments.

Property 1 of \Def{ic} is fairly trivial. Property 2 implies that $M_n(\phi_n)$ converges if $\{\phi_n\}$ is quickly computable and all the $\phi_n$ are provably equivalent; the more general form of Property 2 is used in \Thm{exclusive}. Property 3 is the meat of inductive coherence; it ensures $M$ recognizes exclusivity relationships between provable sentences. While \Def{ic} only mentions partitions of truth of size 3, it is strong enough to guarantee $M$ recognizes arbitrarily large partitions of truth.
\begin{thm} \label{thm:converge2}
  If $M$ is inductively coherent, for any list of $k$ quickly computable sequences of sentences, $\{ \phi^1_n \}, \ldots \{ \phi^k_n \}$, such that for each $n$, it's provable that $\phi^i_n$ is true for exactly one $i$, we have that $$\lim_{n\rightarrow\infty} \sum_{i=1}^k M_n(\phi^i_n)=1.$$
\end{thm}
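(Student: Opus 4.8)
\section*{Proof proposal for Theorem~\ref{thm:converge2}}

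The plan is to prove the statement by induction on $k$. The base cases $k \le 3$ are immediate. For $k = 3$ the claim is exactly property~3 of \Def{ic}. For $k = 2$, the constant sequence $\{\bot\}$ is quickly computable and $\{\phi^1_n,\phi^2_n,\bot\}$ is provably a partition of truth, so property~3 together with $\lim_n M_n(\bot) = 0$ gives the claim. For $k = 1$ the single sequence $\{\phi^1_n\}$ consists of theorems of $\T$, so \Thm{converge} applies.

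For the inductive step, assume $k \ge 4$ and that the statement holds for lists of length $k - 1$. Merge the last two sequences by setting $\alpha_n := \phi^{k-1}_n \vee \phi^k_n$. Then $\{\alpha_n\}$ is quickly computable, since a machine producing $\phi^{k-1}_n$ and $\phi^k_n$ within a polynomial time bound can assemble their disjunction with only linear overhead in strings whose lengths are already polynomially bounded. Moreover $\{\phi^1_n,\ldots,\phi^{k-2}_n,\alpha_n\}$ is provably a partition of truth of length $k-1$: if the unique true sentence among $\phi^1_n,\ldots,\phi^k_n$ has index $i \le k-2$ then $\phi^i_n$ remains the unique true sentence of the new list and $\alpha_n$ is false, while otherwise $\alpha_n$ is true and all of $\phi^1_n,\ldots,\phi^{k-2}_n$ are false; this reasoning is purely propositional, so the partition statement is a consequence of the hypothesis and hence a theorem of $\T$. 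The induction hypothesis now gives
$$\lim_{n \to \infty}\left(\sum_{i=1}^{k-2} M_n(\phi^i_n) + M_n(\alpha_n)\right) = 1.$$

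It remains to reverse the merge, i.e.\ to show $\lim_n \bigl(M_n(\alpha_n) - M_n(\phi^{k-1}_n) - M_n(\phi^k_n)\bigr) = 0$. Since $\neg(\phi^{k-1}_n \wedge \phi^k_n)$ is a propositional consequence of the hypothesis, and hence a theorem, the quickly computable sequences $\{\phi^{k-1}_n\}$, $\{\phi^k_n\}$, $\{\neg \alpha_n\}$ provably partition truth, so property~3 gives $\lim_n \bigl(M_n(\phi^{k-1}_n) + M_n(\phi^k_n) + M_n(\neg \alpha_n)\bigr) = 1$. Applying the already-established $k = 2$ case to the sequences $\{\alpha_n\}$ and $\{\neg \alpha_n\}$ gives $\lim_n \bigl(M_n(\alpha_n) + M_n(\neg \alpha_n)\bigr) = 1$. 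Subtracting the first of these limits from the second yields the desired identity; adding it to the displayed consequence of the induction hypothesis gives $\lim_{n \to \infty} \sum_{i=1}^k M_n(\phi^i_n) = 1$, completing the induction.

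I do not anticipate a deep obstacle: the argument reduces to two applications of property~3 and a telescoping of limits. The only points requiring care are clerical --- checking that each Boolean combination used ($\alpha_n$ and $\neg \alpha_n$) stays quickly computable, which holds because polynomial-time computability is closed under the constructions involved on polynomially bounded strings, and checking that the partition-of-truth conditions are preserved, which holds because the needed equivalences are propositional consequences of the hypotheses and therefore provable in $\T$. It is also worth noting explicitly that every limit invoked genuinely exists (property~3 and \Thm{converge} assert convergence, not merely that a $\liminf$ equals a value), so the termwise algebra of limits is legitimate.
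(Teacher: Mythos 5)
Your proposal is correct and follows essentially the same route as the paper: induction on $k$ with the same three auxiliary lists $\{\phi^1_n,\ldots,\phi^{k-2}_n,\phi^{k-1}_n\vee\phi^k_n\}$, $\{\phi^{k-1}_n\},\{\phi^k_n\},\{\neg(\phi^{k-1}_n\vee\phi^k_n)\}$, and $\{\phi^{k-1}_n\vee\phi^k_n\},\{\neg(\phi^{k-1}_n\vee\phi^k_n)\}$, combined by adding/subtracting the three limits. The only blemish is a one-word slip at the end: the identity $\lim_n\bigl(M_n(\alpha_n)-M_n(\phi^{k-1}_n)-M_n(\phi^k_n)\bigr)=0$ should be \emph{subtracted} from (not added to) the induction-hypothesis limit to recover $\lim_n\sum_{i=1}^k M_n(\phi^i_n)=1$.
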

\begin{proof}
  The proof works by induction on $k$. For $k=1$, this is Theorem \ref{thm:converge}. For $k=2$, apply property 3 to the sequences $\{ \phi^1_n \}$, $\{ \phi^2_n \}$, $\{ \bot \}$. For $k=3,$ this is exactly property 3.

  For $k>3$, consider the list $\{ \phi^1_n \}, \ldots \{ \phi^{k-2}_n \}, \{\phi^{k-1}_n\vee \phi^{k}_n\}$; the list $\{\phi^{k-1}_n\},\{\phi^{k}_n\},\{\neg (\phi^{k-1}_n\vee \phi^{k}_n)\}$; and the list $\{\phi^{k-1}_n\vee \phi^{k}_n\},\{\neg( \phi^{k-1}_n\vee \phi^{k}_n)\}$.

These lists are of length $k-1$, $3$, and $2$ respectively and all satisfy the conditions of this theorem. Apply this theorem to all three lists, add the first two results and subtract the third. This gives $\lim_{n\rightarrow\infty} \sum_{i=1}^k M_n(\phi^i_n)=1+1-1=1.$
\end{proof}

Intuitively, an inductively coherent $M$ is good at identifying any pattern in what is provable that can be expressed using the properties of \Def{ic}. As an example, observe that if $q$ can quickly compute an infinite sequence of provably mutually exclusive sentences, $M$ must eventually start noticing that those sentences are mutually exclusive, not much later than $q$:

\begin{thm} \label{thm:exclusive}
If $M$ is inductively coherent, then for a quickly computable sequence of mutually exclusive sentences, $\{\phi_n \}$, we have $\lim_{n \rightarrow \infty} M_n(\phi_n) = 0$.
\end{thm}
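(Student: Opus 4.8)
The plan is to have $M$ ``notice'' the exclusivity by tracking the partial disjunctions $\sigma_n := \phi_0 \vee \cdots \vee \phi_n$, with the convention $\sigma_m := \bot$ for $m < 0$. Since $\{\phi_n\}$ is quickly computable, so is $\{\sigma_n\}$, and $\sigma_n \to \sigma_{n+1}$ is a theorem, so by property 2 of \Def{ic} the limit $\lim_n M_n(\sigma_n)$ exists; the same holds for $\lim_n M_n(\sigma_{n-1})$. Because each $\phi_n$ is provably exclusive with $\phi_0, \dots, \phi_{n-1}$, the sentence $\neg(\sigma_{n-1} \wedge \phi_n)$ is a theorem, and $\sigma_n$ is just $\sigma_{n-1} \vee \phi_n$, so $\{\sigma_{n-1}\}, \{\phi_n\}, \{\neg\sigma_n\}$ is a provable partition of truth at every $n$ (for $n=0$ it is $\{\bot\},\{\phi_0\},\{\neg\phi_0\}$). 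Applying property 3 to this triple and to $\{\sigma_n\}, \{\neg\sigma_n\}, \{\bot\}$, subtracting, and using $\lim_n M_n(\bot) = 0$, I obtain $$\lim_{n\to\infty} M_n(\phi_n) \;=\; \lim_{n\to\infty} M_n(\sigma_n) \;-\; \lim_{n\to\infty} M_n(\sigma_{n-1}),$$ i.e. the diagonal value on $\phi_n$ is, in the limit, a single ``increment'' of the diagonal of $\{\sigma_n\}$.

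Next I would run this for every delayed copy. For each fixed $k \ge 0$, the triple $\{\sigma_{n-k-1}\}, \{\phi_{n-k}\}, \{\neg\sigma_{n-k}\}$ is quickly computable and a provable partition of truth at every $n$, so the same subtraction gives $\lim_n M_n(\phi_{n-k}) = \lim_n M_n(\sigma_{n-k}) - \lim_n M_n(\sigma_{n-k-1})$, with all three limits existing by property 2. Summing over $k = 0, 1, \dots, K$: for each fixed $n$ the right-hand sides telescope (the clock index is held at $n$ throughout), and letting $n \to \infty$ yields $$\sum_{k=0}^{K} \lim_{n\to\infty} M_n(\phi_{n-k}) \;=\; \lim_{n\to\infty}\bigl(M_n(\sigma_n) - M_n(\sigma_{n-K-1})\bigr) \;\le\; 1,$$ valid for every $K$, since $M_n$ is $[0,1]$-valued. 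One could get this same bound without the $\sigma_n$'s by applying \Thm{converge2} to the delayed copies $\{\phi_n\}, \{\phi_{n-1}\}, \dots, \{\phi_{n-K}\}$ together with $\{\neg(\phi_n \vee \cdots \vee \phi_{n-K})\}$; the $\sigma_n$'s are only there so that each $\lim_n M_n(\phi_{n-k})$ is already known to exist.

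The remaining ingredient --- and the step I expect to be the real obstacle --- is to see that $\lim_n M_n(\phi_{n-k})$ is independent of $k$, equivalently that prepending finitely many copies of $\bot$ to a quickly computable mutually exclusive sequence does not alter its diagonal limit. Granting that, the displayed estimate becomes $(K+1)\lim_n M_n(\phi_n) \le 1$ for all $K$, so $\lim_n M_n(\phi_n) = 0$, as claimed. Pure bookkeeping does not seem to suffice here: comparing $\{\sigma_{n-k}\}$ with $\{\sigma_n\}$ via property 2 only reproduces the very quantities $\lim_n M_n(\phi_{n-j})$, since the gap between their diagonal limits equals $\sum_{j<k}\lim_n M_n(\phi_{n-j})$. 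So the exclusivity of the $\phi_n$, rather than algebra alone, must be what rules out a positive diagonal limit; this shift-invariance is the delay-analogue of \Thm{equiv}, and I take it to be the heart of the proof.
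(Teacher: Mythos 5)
Your construction is exactly the paper's: the paper sets $\psi_n := \bigvee_{i \le n} \phi_i$ (your $\sigma_n$), gets $\lim_n M_n(\psi_n) = p$ from property 2, gets $\lim_n M_n(\neg\psi_n) = 1-p$ from property 3 applied to $\{\psi_n\},\{\neg\psi_n\},\{\bot\}$, and then applies property 3 to $\{\psi_{n-1}\},\{\phi_n\},\{\neg\psi_n\}$ --- which is your first display. The difference is that the paper treats the identification $\lim_n M_n(\psi_{n-1}) = p$ as immediate and concludes, whereas you stop there and flag it as an unproved ``shift-invariance'' assumption; as written, your telescoping only yields $\lim_n M_n(\phi_{n-k}) \to 0$ as $k \to \infty$, not the theorem. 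So your proposal has a genuine gap (one you candidly announce), and you have in fact put your finger on a step the paper itself passes over silently.

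However, your diagnosis of that step is off: you claim ``pure bookkeeping does not suffice'' and that the exclusivity of the $\phi_n$ must be what forces the shifted and unshifted diagonal limits to agree. In fact shift-invariance holds for \emph{every} quickly computable provable chain, by property 2 alone, via interleaving. Set $\sigma_{-1} := \bot$ and define $\beta_{2m} = \beta_{2m+1} = \sigma_{2m}$ and $\gamma_{2m} = \sigma_{2m-1}$, $\gamma_{2m+1} = \sigma_{2m+1}$. Both sequences are quickly computable and satisfy $\beta_n \to \beta_{n+1}$ and $\gamma_n \to \gamma_{n+1}$ provably in $\T$, so by property 2 the diagonals $M_n(\beta_n)$ and $M_n(\gamma_n)$ converge. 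The even-indexed subsequence of the first and the odd-indexed subsequence of the second are subsequences of $M_n(\sigma_n)$, so both limits equal $p$; their complementary subsequences are precisely the odd- and even-indexed subsequences of $M_n(\sigma_{n-1})$, whence $\lim_n M_n(\sigma_{n-1}) = p$. Plugging this into your first display gives $\lim_n M_n(\phi_n) = p - p = 0$, completing your argument. Exclusivity is needed only where you already use it, namely to make $\{\sigma_{n-1}\},\{\phi_n\},\{\neg\sigma_n\}$ a provable partition of truth; the step you took to be the heart of the proof is closed by bookkeeping after all.
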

\begin{proof}
Define $\psi_n$ to be the disjunction of all $\phi_{i \leq n}$. Applying property 2 to $\{ \psi_n \}$, we have that $\lim_{n \rightarrow \infty} M_n(\psi_n)$ converges to some $p$. Applying property 3 to the sequences $\{ \psi_n \}$, $\{ \neg \psi_n \}$, and $\{ \bot \}$, we have that $\lim_{n \rightarrow \infty} M_n(\lnot \psi_n)$ converges to $1-p$. Therefore, applying property 3 to $\{ \psi_{n-1} \}$, $\{ \phi_n \}$, and $\{ \neg \psi_n \}$ shows that $\lim_{n \rightarrow \infty} M_n(\phi_n) = 0$ as desired. (Note that $\{ \psi_{n} \}$, $\{ \neg \psi_n \}$, and $\{ \psi_{n-1} \}$ are all quickly computable if $\{ \phi_n \}$ is.)
\end{proof}

\noindent Does an inductively coherent $M$ quickly identify \emph{all} quickly identifiable patterns in claims about computations? Probably not; limitations are discussed in \Sec{discussion}.  However, we can show that $M$ will \emph{eventually} identify all patterns in which sentences are provable, by showing that an inductively coherent $M$ is coherent in the limit. This may be surprising at first glance, given that \Def{ic} only mentions convergence for sequences that can be computed in polynomial time. The trick is that any constant sequence $\phi_n \coloneqq \phi$ is ``quickly computable," by the Turing machine that ignores $n$ and always outputs $\phi$.

\begin{thm}
  If $M$ is inductively coherent, then $$P(\phi):=\lim_{n\rightarrow\infty} M_n(\phi)$$ is well-defined, approximable, and coherent.
\end{thm}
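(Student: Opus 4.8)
The plan is to verify the three claimed properties of $P$ — well-definedness, approximability, and coherence — in turn, in each case reducing to a statement about \emph{constant} sequences of sentences, which are quickly computable by the Turing machine that ignores its input and always prints the same sentence.

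First I would establish that $P$ is well-defined. Fixing $\phi$, I take the constant sequence $\phi_n \coloneqq \phi$; it is quickly computable, and $\phi_n \to \phi_{n+1}$ is just the theorem $\phi \to \phi$ of $\T$. So property 2 of \Def{ic} gives that $\lim_n M_n(\phi) = \lim_n M_n(\phi_n)$ exists, and since each $M_n(\phi)\in[0,1]$ the limit lies in $[0,1]$; hence $P\colon\Lang\to[0,1]$ is a genuine probability distribution. Approximability is then immediate: $M$ is an approximation scheme, and by the definition of $P$ we have $\lim_n M_n(\phi)=P(\phi)$ for every $\phi$, so $M$ approximates $P$.

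It remains to check the two coherence conditions. For condition~1, if $\phi$ is a theorem of $\T$ then the constant sequence $\{\phi\}$ is a quickly computable sequence of theorems, so $P(\phi)=\lim_n M_n(\phi)=1$ by Theorem~\ref{thm:converge}. For condition~2, suppose $\neg(\phi\wedge\psi)$ is a theorem; the key observation is that, reasoning inside $\T$, a short propositional case analysis shows $\{\phi,\psi,\neg(\phi\vee\psi)\}$ is a partition of truth (if $\phi$ and $\psi$ were both true we would contradict $\neg(\phi\wedge\psi)$; otherwise exactly one of ``$\phi$'', ``$\psi$'', ``neither'' holds, and ``neither'' is $\neg(\phi\vee\psi)$). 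Applying property~3 to the constant sequences $\{\phi\}$, $\{\psi\}$, $\{\neg(\phi\vee\psi)\}$ gives $P(\phi)+P(\psi)+P(\neg(\phi\vee\psi))=1$. Separately, $\{\phi\vee\psi\}$, $\{\neg(\phi\vee\psi)\}$, $\{\bot\}$ is provably a partition of truth for any sentences, so property~3 together with $P(\bot)=\lim_n M_n(\bot)=0$ (property~1) gives $P(\phi\vee\psi)+P(\neg(\phi\vee\psi))=1$. Subtracting the two identities yields $P(\phi\vee\psi)=P(\phi)+P(\psi)$, completing coherence.

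I expect no serious obstacle: the whole argument hinges on the (already flagged) trick that constant sequences are quickly computable, plus the small in-$\T$ verifications that the relevant triples are provably partitions of truth. One could instead obtain condition~2 directly from Theorem~\ref{thm:converge2} with $k=2$ rather than re-deriving the $\neg\chi$ complement relation by hand, but either route is routine.
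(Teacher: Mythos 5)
Your proof is correct and follows essentially the same route as the paper: treat each sentence as a constant (hence quickly computable) sequence, get well-definedness from property 2, and get both coherence conditions from property 3 applied to small provable partitions of truth, with property 1 handling $\bot$. The only difference is cosmetic---the paper uses the partition $\{\phi,\psi,\neg\phi\wedge\neg\psi\}$ and leaves the step linking this to $P(\phi\vee\psi)$ implicit, whereas you spell that step out via the partition $\{\phi\vee\psi,\neg(\phi\vee\psi),\bot\}$, and you invoke the earlier theorem on quickly computable theorems where the paper uses the partition $\{\phi,\bot,\bot\}$ directly.
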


\begin{proof}
  That $P(\phi)$ is well-defined follows from property 2 and the fact that the constant sequence $\{ \phi \}$ is quickly computable for any $\phi$. Approximability then follows trivially from the definition of $P(\phi)$. $P(\phi)$ is in $[0,1]$ because $M_n(\phi)$ is in $[0,1]$, and the limit of any sequence in $[0,1]$ is in $[0,1]$. It remains to show that $P$ is coherent.

  The first property of coherence follows from properties 1 and 3 with the partition $\{ \phi, \bot, \bot \}$. The second follows with the partition $\{\phi, \psi, \neg \phi \wedge \neg \psi \}$.
\end{proof}

Thus we see that we are justified in saying that an inductively coherent $M$ must both quickly identify some patterns in what is provable, and eventually identify all patterns. This implies an inductively coherent $M$ would assign reasonable probabilities to sentences describing the behavior of computations, even before the computation can be run, because statements about computations are decidable. If there is \emph{any} polynomial-time method for accurately noticing relationships between computations, $M$ will eventually recognize it and distribute its probability mass accordingly.

One way to look at inductive coherence is this: Coherence in the limit requires that each \emph{individual sentence} is eventually assigned a reasonable probability, possibly only after that sentence is decided. Inductive coherence requires that there is some uniform bound past which \emph{all} theorems that can be quickly identified as theorems start to be assigned high probability, as quickly as they can be identified.

\section{An Inductively Coherent Approximation Scheme} \label{sec:prior}

We now turn our attention to providing an inductively coherent approximation scheme $M^*$ which approximates a coherent distribution $P^*$. We do this by defining a variant of the distribution and approximation scheme defined by \citet{Demski:2012a}, and showing that our variant is inductively coherent.

We begin by defining an uncomputable process (which we later show to be approximable) that builds a random complete, consistent extension \TotalT of \T. We define $P^*(\phi)$ to be the probability that $\phi \in \TotalT$ if \TotalT is sampled according to this process. To build \TotalT, we sample random Turing machines according to a simplicity prior, and interpret their outputs as claims about which sentences are true. If the machine makes claims that are consistent with \TotalT so far, we add those claims to \TotalT and repeat. Otherwise we discard that machine and repeat. Continuing indefinitely, \TotalT will (with probability~1) be a complete consistent extension of \T in the limit.

To formalize this idea, fix a universal Turing machine $\U$ with an advance-only output tape, with its input tape initialized to a random infinite bitstring~$b$. We can interpret an infinite bitstring as a self-delimiting encoding of a Turing machine followed by a specification of the initial state of that machine's tape. Fix an enumeration of sentences in the language $\Lang$, and interpret the output of $\U$ on input tape~$b$ as a sequence of sentences. Write $\U(b)$ for the set of sentences output by $\U$ on the input~$b$, and write $\U_t(b)$ for the finite set of sentences output during the first~$t$ steps of operation. For a finite bitstring $x$, write $\U(x)=S$ if $\U(b)=S$ for all~$b$ which have~$x$ as a prefix, and similarly for $\U_t(x)$. Note that if the length of~$x$ is at least~$t$, then $\U_t(x)$ is always well-defined, because $\U$ cannot read more than~$t$ input bits in~$t$ steps. The (uncomputable) process converging on \TotalT is then defined by \Alg{Complete}.

\begin{algorithm}[ht]
  \KwIn{$b_1$, $b_2$, $\ldots$\,, an infinite list of infinite bitstrings.}
  \KwData{$\TotalT \leftarrow T$, the extension of \T under construction.}
  \For{$b$ in $b_1$, $b_2$, $\ldots$}{
    \If{$\TotalT \cup \U(b)$ is consistent}{
      $\TotalT \leftarrow \TotalT \cup \U(b)$\;
    }
  }
  \caption{A method for constructing a complete, consistent extension of $T$.\label{alg:Complete}}
\end{algorithm}

We define $P^*(\phi)$ to be the probability that $\phi \in \TotalT$ when the $b_i$ are chosen uniformly at random (e.g., by fair coin tosses; recall that a single stream of coin tosses can encode an infinite sequence of infinite bitstrings).

$P^*$ has the desirable property that, for every noncontradictory sentence $\phi$, $P^*(\phi)$ is lower-bounded by the complexity of the Turing machine that outputs only $\phi$. To see this, let $w_\phi$ be the bitstring encoding that machine with respect to $U$; the chance that $b_1$ starts with $w_\phi$ is at least $2^{-|w_\phi|}$.

To see that $P^*(\phi)$ is coherent, note that with probability~1 $\TotalT$ is a complete consistent extension of $\T$, so $P^*$ is isomorphic to a distribution~$\mu$ over complete consistent extensions of $\T$, which means it is coherent~\citep{Gaifman:1964}. $P^*$ is uncomputable, but can be approximated by \Alg{mstar}.

\begin{algorithm}
  \SetKwData{Theory}{$\Phi$}
  \BlankLine
  \Fn{\ClaimsN{$b_1$, $\ldots$\,, $b_{2^n}$}}{
    $\Theory \leftarrow \text{the first $n$ axioms of $\T$}$\;
    \For{$i$ in $0 \ldots 2^n$}{
      $S \leftarrow \text{$\U_{2^n}(b_i)$ interpreted as a list of sentences}$\;
      \If{\ConN{$\Theory \cup S$}}{
        $\Theory \leftarrow \Theory \cup S$\;
      }
    }
    \For{$\phi$ in \Theory}{\Output{$\phi$}}
  }
  \BlankLine
  \Fn{\ConN{\Theory}}{
    \If{a proof of length $\le 2^n$ proves $\Theory$ inconsistent}{\KwRet{false}}
    \For{$S \subset \Theory$}{
      \For{$\phi$ of length $\le 2^n$ such that $S \cap \{\phi, \lnot \phi\}$ is empty}{
        \If{$\lnot$\ConN{$S \cup \{\phi\}$} and $\lnot$\ConN{$S \cup \{\lnot \phi\}$}}{
          \KwRet{false}
        }
      }
    }
    \KwRet{true}
  }

  \BlankLine
  \Fn{$M^*_n(\phi)\,$}{
    \KwRet{the probability \ClaimsN{$b_1$, $\ldots$\,, $b_{2^n}$} outputs $\phi$ when the $b_i$ are uniform random bitstrings of length $2^n$.\DontPrintSemicolon\;}
  }
  \caption{Computable approximation scheme for $P^*$ \label{alg:mstar}}
\end{algorithm}

\noindent Of note is the function \ConN{}, which checks whether a set of sentences $\Phi$ is ``consistent enough" for time $n$. It checks not only whether $\Phi$ can be proven inconsistent with a proof of length $2^n$ or less, but also whether there is a subset $S\subset\Phi$ and sentence $\phi$ such that both $\Phi\cup\{\phi\}$ and $\Phi\cup\{\neg\phi\}$ can be proven inconsistent in length $2^n$. (This implies that $\Phi$ is inconsistent, but the proof may be longer than length $2^n$ unless one of $\phi$ or $\lnot \phi$ is added.) This gives \ConN{} a convenient closure property.

\begin{thm} \label{thm:main}
$M^*$ is an inductively coherent approximation scheme which approximates $P^*$.
\end{thm}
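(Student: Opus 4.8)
The plan is to prove the two halves separately — that $M^*$ approximates $P^*$, and that $M^*$ is inductively coherent — extracting along the way the structural facts about \ConN{} that everything rests on. The first, proved by induction on the recursion of \ConN{}, is a soundness/closure package: (i) if a finite set $\Phi$ is genuinely consistent then \ConN{$\Phi$} returns \textsf{true} for every $n$, since $\lnot$\ConN{$S\cup\{\xi\}$} and $\lnot$\ConN{$S\cup\{\lnot\xi\}$} already force $S$ inconsistent, so the recursive clause never fires on a consistent set; (ii) if $\Phi$ is inconsistent then \ConN{$\Phi$} returns \textsf{false} once $2^n$ exceeds the length of some refutation; and (iii) the closure property: if \ConN{$\Phi$} holds and $\xi$ has length $\le 2^n$ and is undecided by $\Phi$, then at least one of \ConN{$\Phi\cup\{\xi\}$}, \ConN{$\Phi\cup\{\lnot\xi\}$} holds, and more generally a \ConN{}-consistent set cannot contain the premises of a length-$\le 2^n$ derivation of $\bot$ from length-$\le 2^n$ sentences. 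To get $M^*_n(\phi)\to P^*(\phi)$ I would couple \Alg{Complete} and \Alg{mstar} on one shared bit stream, handing \Alg{mstar} at stage $n$ the length-$2^n$ prefixes of the first $2^n$ bitstrings. Since $P^*(\rho)\ge 2^{-|w_\rho|}>0$ for noncontradictory $\rho$, with probability $1$ the idealized process decides $\phi$ after finitely many bitstrings, using finitely many bits, simulation steps, axioms, and consistency checks; on this event, for $n$ large enough that $2^n$ dominates all those finite quantities, facts (i)--(ii) make \Alg{mstar} reach the same verdict, so $|M^*_n(\phi)-P^*(\phi)|$ is small. As $P^*$ is coherent, this gives the approximation claim; in particular Property~1 of \Def{ic} holds (indeed \ConN{$\{\bot\}$} fails for all $n$, so $M^*_n(\bot)=0$ outright).

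For Properties~2 and~3 the engine is: if $\{\theta_n\}$ is quickly computed by $q$, then $q$ is one fixed finite program, so among the $2^n$ random bitstrings of length $2^n$ at least one begins with the code of $q$ with probability $1-(1-2^{-|w_q|})^{2^n}\to 1$, and by time $2^n$ (for large $n$, as $q$ is polynomial-time) that copy has emitted $\theta_n$ — and every $\theta_m$ for $m$ up to a bound growing exponentially in $n$. When the $\theta_n$ are theorems, adjoining them keeps \ConN{} happy by (i), so with probability $\to 1$ the output $\Phi$ of \ClaimsN{} contains all of $\theta_0,\dots,\theta_n$; applied to $\{\theta_n\}$ this reproves \Thm{converge} for $M^*$. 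For Property~2 I would run this with the quickly computable sequences $\{\phi_n\to\phi_{n+1}\}$ (theorems) and $\{\lnot\phi_n\}$: with probability $\to 1$ the implications up to index $n$ all land in $\Phi$, so by (iii) $\Phi$ can contain neither $\phi_k$ together with $\lnot\phi_n$ nor $\lnot\phi_k$ together with $\phi_n$ for $k\le n$; with $M^*_n(\phi_n)+M^*_n(\lnot\phi_n)\le 1$ (as $\{\phi_n,\lnot\phi_n\}$ is a length-$\le 2^n$ contradiction) and the known $M^*_n(\phi_k)\to P^*(\phi_k)$, this traps $M^*_n(\phi_n)$ between $\sup_k P^*(\phi_k)\pm o(1)$, so it converges. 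Property~3 is the same move with the full partition sentence $\delta_n := (\phi_n\vee\psi_n\vee\chi_n)\wedge\lnot(\phi_n\wedge\psi_n)\wedge\lnot(\phi_n\wedge\chi_n)\wedge\lnot(\psi_n\wedge\chi_n)$, a quickly computable theorem: once $\delta_n\in\Phi$, fact (iii) forbids $\Phi$ from holding two of $\phi_n,\psi_n,\chi_n$ (giving sum $\le 1+o(1)$) and forbids a \ConN{}-consistent extension of $\Phi$ from holding none of them, which should give the matching lower bound.

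The step I expect to be the main obstacle — common to the lower bounds of Properties~2 and~3 — is the passage from the closure property, which says a \ConN{}-consistent set \emph{can be extended} to decide a short sentence, to a statement about the set $\Phi$ that \ClaimsN{} \emph{actually outputs}: that with probability $\to 1$, uniformly in $n$, $\Phi$ already decides $\phi_n$ (resp.\ one of $\phi_n,\psi_n,\chi_n$). This is where the budget of $2^n$ machines must be spent, and it is delicate: a machine emitting the whole stream $\phi_0,\phi_1,\dots$ is adjoined all-or-nothing, so one earlier $\lnot\phi_j\in\Phi$ can block it, whereas the single-sentence machine emitting $\phi_n$ alone has description length of order $|\phi_n|$, which can grow faster than $n$ and hence is not reliably sampled. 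One must therefore interleave the sampling of the quick-computers of $\{\phi_n\}$, $\{\psi_n\}$, $\{\chi_n\}$ and of the relevant Boolean combinations with the closure property to rule out that these decisions stay open with non-vanishing probability; making this precise, uniformly in $n$, is the technical core, after which the remaining Boolean bookkeeping and the two-sided estimates are routine.
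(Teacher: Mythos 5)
Your proposal is incomplete at exactly the point you yourself flag as ``the technical core,'' and the reason you give for thinking that point is hard is mistaken. You dismiss the single-sentence-machine route on the grounds that a machine emitting $\phi_n$ alone has description length of order $|\phi_n|$; but quick computability gives much more: there is one \emph{fixed} prefix $w_\phi$ such that $\U(w_\phi x)$ runs the quick computer of $\{\phi_n\}$ internally and outputs the \emph{single} sentence $\phi_n$ whenever $x$ encodes $n$, which takes only about $2\log_2 n$ bits. Hence each of the $2^n$ random strings $b_i$ (of length $2^n$) begins with $w_\phi\, x$ with probability at least $2^{-k}n^{-2}$, so some $b_i$ does with probability $1-(1-2^{1-k}n^{-2})^{2^n}\rightarrow 1$, and likewise for a machine emitting only $\lnot\phi_n$. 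This is the engine of the paper's \Lem{inverse}: once both singletons are sampled, the closure property of \ConN{} (if both $\Phi\cup\{\phi_n\}$ and $\Phi\cup\{\lnot\phi_n\}$ are rejected then $\Phi$ itself is rejected, and rejection is monotone under adding sentences) forces $\Phi_n$ to contain \emph{exactly} one of $\phi_n,\lnot\phi_n$ with probability tending to $1$. That is precisely the ``passage from the closure property to the set $\Phi$ actually output'' that you leave open, and without it your lower bounds for Properties~2 and~3 and your convergence argument do not go through; so the proposal has a genuine gap rather than a deferred routine step.

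Two secondary problems. First, in your Property~2 sketch the claim that $\Phi$ cannot contain $\lnot\phi_k$ together with $\phi_n$ for $k\le n$ is backwards: the chain $\phi_k\rightarrow\phi_{k+1}\rightarrow\cdots\rightarrow\phi_n$ refutes $\phi_k\wedge\lnot\phi_n$ but not $\lnot\phi_k\wedge\phi_n$, and the resulting ``trap between $\sup_k P^*(\phi_k)\pm o(1)$'' is asserted, not derived; relating $\PP(\phi_n\in\Phi_n)$ to $P^*(\phi_k)$ at a fixed index needs an argument. The paper's \Lem{converges} instead conditions on the index $k$ at which the chain first becomes true in $\TotalT$ (defining $\phi^k_i=\lnot\phi_i$ for $i<k$ and $\phi_i$ for $i\ge k$, each $\{\phi^k_i\}$ quickly computable), shows $\PP(\phi_n\in\Phi_n\mid\forall i\,\phi^k_i\in\TotalT)$ converges for each $k$ (to $1$ for finite $k$ by the \Lem{main} argument, to $0$ for $k=\infty$ by \Lem{inverse}), and takes the weighted average. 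Second, your approximation claim is obtained by a separate coupling sketch, whereas in the paper it falls out of the same machinery (the constant sequence with $k=0$ and $k=\infty$ in \Lem{converges}); your coupling idea could probably be made to work, but as written it leans on the same undischarged step, since agreement of \ClaimsN{} with the idealized process again requires that the finitely many relevant consistency verdicts stabilize, which is what \Lem{main}/\Lem{inverse} are for.
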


\noindent We show that $M^*$ recognizes quickly computable theorems, which is suggestive. The rest of the proof is in \App{ic}.

\begin{lem} \label{lem:main}
If $\{\phi_n\}$ is a quickly computable sequence of theorems, then $\lim_{n\rightarrow\infty} M^*_n(\phi_n)=1.$
\end{lem}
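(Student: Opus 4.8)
The plan is to exhibit a single short $\U$-program that, once it is sampled and incorporated by \ClaimsN{}, pins $\phi_n$ into the theory under construction, and to show that with probability tending to~$1$ this program is both sampled and successfully incorporated. First I would turn the machine $q$ that quickly computes $\{\phi_n\}$ into a machine $R$ that simulates $q$ and prints $\phi_1,\phi_2,\phi_3,\dots$ in order; let $w$ be a self-delimiting $\U$-code for $R$, of some constant length~$k$. Since $q$ emits $\phi_n$ within a fixed polynomial of~$n$ steps and $\U$ simulating $R$ (hence $q$) adds only a further fixed polynomial of overhead, there is an~$n_0$ such that for all $n\ge n_0$ and every bitstring $b$ beginning with~$w$ we have $\{\phi_1,\dots,\phi_n\}\subseteq\U_{2^n}(b)$ (and $2^n\ge k$ automatically). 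The bitstrings $b_1,\dots,b_{2^n}$ are independent and uniform of length~$2^n$, so each begins with~$w$ with probability exactly~$2^{-k}$, whence the chance that at least one of them does is $1-(1-2^{-k})^{2^n}\to 1$.

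Next I would show that whenever \ClaimsN{} processes a $b_i$ beginning with~$w$ and the partial theory $\Phi$ at that moment is still ``clean'' --- meaning \ConN{} accepts $\Phi$ together with the set $\U_{2^n}(b_i)=\{\phi_1,\dots,\phi_m\}$ --- then all of $\phi_1,\dots,\phi_m$, in particular $\phi_n$ since $m\ge n$, are absorbed into $\Phi$ and, as $\Phi$ only grows, survive to the output. A sufficient condition for cleanness is the closure property of \ConN{} noted in the text: \ConN{} never rejects a genuinely consistent set, so it is enough that $\Phi$ be consistent with~\T at that point, the $\phi_j$ being theorems of~\T. What remains is to bound the ``bad'' event that, before the first $w$-prefixed $b_i$ is reached, $\Phi$ has already absorbed a sentence that conflicts by a short proof with some $\phi_j$ (the simplest such conflict being an explicit $\neg\phi_j$).

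Controlling this bad event is the main obstacle. A fixed program emitting a fixed poisoning sentence is harmless for large~$n$, since the short refutation of that sentence from the axioms already present eventually fits in the $2^n$ budget and \ConN{} rejects the offending output; the difficulty is the infinite family of poisoners of slowly growing description length --- e.g.\ programs asserting $\neg\phi_j$ only for~$j$ above a hard-coded threshold, or chaining the $\phi_j$ to a fixed undecidable sentence through implications --- against which a naive union bound over programs diverges. I would handle this by combining two facts: the first $w$-prefixed $b_i$ is reached after only $O(2^k)=O(1)$ steps in expectation, so only $O(1)$ machines in expectation get a chance to poison $\Phi$ first; and \ConN{}'s recursive case-splitting neutralizes an index-$j$ poisoner by certifying $\vdash\phi_j$ from short intermediate lemmas, thereby rejecting that poisoner's output, so that a poisoner can still be ``active'' at stage~$n$ only for~$j$ so large that every proof of $\phi_j$ has a lemma longer than~$2^n$ --- which forces the poisoner's code to be long and hence rarely sampled. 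Making this trade-off quantitative, i.e.\ showing $\sum 2^{-(\text{code length})}\to 0$ over the poisoners still active at stage~$n$, is the technical core of the argument; everything else is the routine packaging above.
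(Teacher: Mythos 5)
Your first two steps match the paper's proof: a constant-length $\U$-code $w_\phi$ for a machine that prints $\phi_1,\phi_2,\dots$ is sampled with high probability, and since \ConN{} never rejects a genuinely consistent set, the theorems are absorbed whenever the theory $\Phi$ accumulated so far is genuinely consistent with \T. The gap is precisely the step you defer as the ``technical core'': bounding the probability that $\Phi$ is already corrupted when the first $w_\phi$-prefixed $b_i$ is processed. The mechanism you propose---that a poisoner still undetected at stage $n$ must target indices $j$ whose proofs have no short lemma decomposition, hence must have long code, so that a code-length-weighted union bound over active poisoners vanishes---is unjustified and in general false. Code length controls neither the length of the shortest proof of $\phi_j$ nor the reach of \ConN{}'s case-splitting: a program of length roughly $\log_2 j_0$ can print the single sentence $\neg\phi_{j_0}$ for a hard-coded $j_0$ whose shortest proof vastly exceeds $2^n$ throughout the (possibly long) range of $n$ in which $w_\phi$ already prints $\phi_{j_0}$ by time $2^n$; nothing forces \ConN{} to reject it, yet once accepted it causes the entire batch $\U_{2^n}(w_\phi)$, and with it $\phi_n$, to be refused. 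Summed over the relevant $j_0$ at a fixed stage $n$, your weighted bound need not tend to $0$ (you note yourself that the naive version diverges, and the repair is exactly what is left unproved). You also treat only single poisoning sentences, whereas several machines with individually \T-consistent outputs can be jointly inconsistent with \T.

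The paper avoids quantifying over all programs. Fix $N_0$ so that, with probability at least $1-\varepsilon/2$, some $b_i$ with $i\le 2^{N_0}$ begins with $w_\phi$ (and $\phi_n\in\U_{2^n}(w_\phi)$ for all $n\ge N_0$); under the coupling in which the same infinite bitstrings are reused for every $n$, this event does not depend on $n$, so the machines that can precede $w_\phi$ form a \emph{fixed finite} collection. Any subset of their outputs that is inconsistent with \T has a finite witness (finitely many printed sentences, finitely many axioms, one finite proof), and only finitely many witnesses are needed in total; hence there is an $N_1\ge N_0$ such that, with probability at least $1-\varepsilon/2$, for all $n\ge N_1$ every such addition is rejected by \ConN{}. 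Consequently, for $n\ge N_1$ the accepted theory at $w_\phi$'s turn is genuinely consistent with \T, your closure property applies, and $\phi_n\in\Phi_n$ with probability at least $1-\varepsilon$. The point is that no poisoner ever needs to be refuted quickly relative to its code length; each of the finitely many machines that can come first is merely handled eventually as $n\to\infty$. Your expected-$O(2^k)$ remark gestures at this, but without an $n$-independent cutoff and the ``only finitely many proofs of inconsistency are needed'' argument, the proposal as written is incomplete.
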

\begin{proof}
  Write $\Phi_n$ for a random run of $\ClaimsN{$b_1$ \ldots $b_{2^n}$}$ when the $b_i$ are chosen uniformly at random. We want to show that, for $n$ large enough, $\PP(\phi_n \in \Phi_n) > 1 - \varepsilon.$  Since $\{\phi_n\}$ is quickly computable, there exists a finite bitstring $w_\phi$ such that $\U(w_{\phi})$ outputs the sentences $\{\phi_n\}$ in order, and for all sufficiently large $n$, $\phi_n\in\U_{2^n}(w_{\phi})$.

  There exists a $N_0$ such that with probability at least $1-\varepsilon/2$, at least one of $b_1 \ldots b_{2^{N_0}}$ starts with $w_\phi$, and $\phi_n\in\U_{2^n}(w_{\phi})$ for all $n\geq N_0$. There also exists a $N_1$ such that with probability $1-\varepsilon/2$, for every subset $S$ of $\{1,\ldots 2^{N_1}\},$ either $\T\cup\bigcup_{i\in S}\U(b_i)$ is consistent or $\lnot \ConN[N_1]{$\T\cup\bigcup_{i\in S}\U_{2^{N_1}}(b_i)$};$ simply choose $N_1$ large enough that any inconsistencies can be uncovered with sentences output by time $2^{N_1}$ and proofs of length less than $2^{N_1}$. (This is possible because only finitely many proofs of inconsistency are needed.)

  Choose $N_1 \ge N_0$. For all $n\geq N_1$, with probability at least $1-\varepsilon/2$, one of the sampled machines (namely $w_\phi$) outputs $\phi_i$ for all $i\leq n$. Then, with probability at least $1-\varepsilon/2$, this implies that this machine will end up contributing to $\Phi_n$ because any machine before $w_\phi$ inconsistent with $w_\phi$ (which outputs only theorems) has been discarded. Therefore, with probability at least $1-\varepsilon$, ${\phi_n\in \Phi_n}$.
\end{proof}

\section{Conclusions} \label{sec:discussion}
We have proposed \emph{inductive coherence} as a strengthening of coherence in the limit. Inductive coherence requires that computable distributions assign probabilities to claims about computations that are reasonable before they're able to run these computations. Specifically, if there is any polynomial-time method for identifying patterns in what is provable, an inductively coherent $M$ must eventually recognize and exploit that pattern, eventually assigning probabilities that are coherent with respect to that pattern. This implies that inductive coherence captures some of what we mean when we ask for a probability distribution that assigns reasonable probabilities to claims about computations.

However, an inductively coherent $M$ doesn't necessarily recognize all patterns in the behavior of computations. For example, consider: Is the $10^{100}$'th decimal digit of $\pi$ a $7$? It seems that in lieu of additional knowledge and the ability to compute the digit, a reasonable estimator should assign this event 10\% probability. Reasonable predictors of computations should be able to recognize similar patterns, such as ``this computation returns an error one time in ten," and assign probabilities accordingly.

More formally, imagine we have some sequence of deterministic computations that output a one on $\sfrac{1}{10}$ of their inputs. Imagine further that there is no polynomial-time algorithm that has better average squared error, when predicting this sequence, than the algorithm ``\Output{$\sfrac{1}{10}$}." It seems reasonable to ask that a predictor of computations start assigning probability $\sfrac{1}{10}$ to the next element in the sequence eventually, until it has enough resources to compute the actual answer. However, we have no reason to expect that an inductively coherent $M$ would have this property. \citet{Garrabrant:2015:alu} study computable distributions that can do this; it is not yet clear how to reconcile our framework with theirs.

This demonstrates that further constraints on approximation schemes are likely necessary before we can define computable distributions that are able to recognize all the patterns in the behavior of computations that humans can easily recognize. Inductive coherence gives us approximate distributions that have some desirable properties in their predictions about computations, but more work is needed before we can say we understand how to assign reasonable uncertainty to claims about computations in general.

\appendix
\section{Proof that \texorpdfstring{$M^*$}{M*} is Inductively Coherent} \label{app:ic}

Let $b_1, b_2, \ldots$ be an infinite list of infinite bitstrings, generated uniformly at random.  Let $\TotalT$ be the complete extension of \T that \Alg{Complete} converges to on the input $b_1, b_2, \ldots$ Let $\Phi_n=\ClaimsN{$b_1$, $b_2$, $\ldots$}$. Note that \ClaimsN{} only reads the first $2^n$ bits of the first $2^n$ bitstrings.

\begin{lem} \label{lem:inverse}
If $\{\phi_n\}$ is a quickly computable sequence of sentences, then $$\lim_{n\rightarrow\infty} M^*_n(\phi_n)+M^*_n(\neg\phi_n)=1.$$
\end{lem}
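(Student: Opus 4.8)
The plan is to work directly with a random run $\Phi_n = \ClaimsN{$b_1$, \ldots, $b_{2^n}$}$, using two structural facts about the subroutine \ConN{}. First, \ConN{} is \emph{anti-monotone}: if $A\subseteq B$ and \ConN{$B$} returns true, so does \ConN{$A$}, because any short proof that $A$ is inconsistent, and any witnessing pair $(S,\chi)$ for the closure clause with $S\subsetneq A$, is also a witness for $B$. Second, \ConN{} has the closure property obtained by taking $S=\Gamma$ in its inner loop: if \ConN{$\Gamma$} is true, $|\chi|\le 2^n$, and $\chi,\lnot\chi\notin\Gamma$, then at least one of \ConN{$\Gamma\cup\{\chi\}$}, \ConN{$\Gamma\cup\{\lnot\chi\}$} is true. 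I will also use that \ConN{$\Phi_n$} is true at the end of every run: since \T is consistent the initial value (the first $n$ axioms) passes \ConN{}, and each iteration replaces the running theory only by a set just certified by \ConN{}.

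Next comes the easy half. Since $\{\phi_n\}$ is quickly computable, $|\phi_n|$ is polynomial in $n$, so for all large $n$ there is a proof of length $\le 2^n$ that $\{\phi_n,\lnot\phi_n\}$ is inconsistent; hence $\Phi_n$ cannot contain both (that would falsify \ConN{$\Phi_n$}). So for large $n$ the events $\{\phi_n\in\Phi_n\}$ and $\{\lnot\phi_n\in\Phi_n\}$ are disjoint and $M^*_n(\phi_n)+M^*_n(\lnot\phi_n)=\PP(\phi_n\in\Phi_n\text{ or }\lnot\phi_n\in\Phi_n)\le 1$. It remains to show this probability tends to $1$.

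The key idea is to feed the construction a machine that outputs \emph{only} $\phi_n$, and separately one that outputs only $\lnot\phi_n$. Using the polynomial-time machine $q$ that quickly computes $\{\phi_n\}$, let $v_n$ be the bitstring encoding ``simulate $q$ until its $n$-th output, print that sentence, halt''; then $|v_n|=O(\log n)$ (the code of $q$ and the wrapper are constant, and $n$ costs $\approx 2\log_2 n$ bits), and for large $n$ this machine produces exactly $\{\phi_n\}$ within $2^n$ steps. Define $v_n'$ the same way for $\{\lnot\phi_n\}$, and let $E_n^+$ (resp.\ $E_n^-$) be the event that some $b_i$ with $i\le 2^n$ has $v_n$ (resp.\ $v_n'$) as a prefix. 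Since we draw $2^n$ independent bitstrings, $\PP\big((E_n^+)^c\big)=(1-2^{-|v_n|})^{2^n}\to 0$ as $n\to\infty$ because $|v_n|=O(\log n)$, and likewise for $E_n^-$. I then claim that on $E_n^+\cap E_n^-$ (and $n$ large) at least one of $\phi_n,\lnot\phi_n$ lies in $\Phi_n$: if neither did, the closure property applied to \ConN{$\Phi_n$} with $\chi=\phi_n$ gives \ConN{$\Phi_n\cup\{\phi_n\}$} or \ConN{$\Phi_n\cup\{\lnot\phi_n\}$}; in the first case, on $E_n^+$ the machine $v_n$ is processed at some iteration with a running theory $\Gamma\subseteq\Phi_n$, and since $\Gamma\cup\{\phi_n\}\subseteq\Phi_n\cup\{\phi_n\}$, anti-monotonicity makes the test \ConN{$\Gamma\cup\{\phi_n\}$} succeed, so $\phi_n$ is added and never removed --- contradicting $\phi_n\notin\Phi_n$; the second case is symmetric using $v_n'$ and $E_n^-$. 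Hence $\PP(\phi_n\notin\Phi_n\text{ and }\lnot\phi_n\notin\Phi_n)\le\PP\big((E_n^+)^c\big)+\PP\big((E_n^-)^c\big)\to 0$, and combined with the easy half, $M^*_n(\phi_n)+M^*_n(\lnot\phi_n)\to 1$.

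The main obstacle is the third paragraph. It is tempting to reuse the whole-sequence machine, as in the proof of Lemma~\ref{lem:main}, but here neither $\{\phi_n\}$ nor $\{\lnot\phi_n\}$ need be a consistent set, so such a machine may eventually be discarded; the remedy is a fresh ``print only $\phi_n$'' machine for each $n$, whose description length is $O(\log n)$ rather than $O(1)$. The non-obvious point is that this still suffices, precisely because at stage $n$ the scheme samples $2^n$ machines, so a per-machine hitting probability of order $1/\mathrm{poly}(n)$ still compounds to $1-o(1)$; the other delicate point is showing the consistency test passes when $v_n$ is processed, which is where anti-monotonicity of \ConN{} and the closure property of \ConN{$\Phi_n$} do the work.
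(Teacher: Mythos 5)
Your proposal is correct and follows essentially the same route as the paper: for each $n$ you sample machines of description length $O(\log n)$ that print only $\phi_n$ and only $\lnot\phi_n$ (so both appear among the $2^n$ draws with probability $\to 1$), rule out both sentences landing in $\Phi_n$ via a short inconsistency proof, and rule out neither landing in $\Phi_n$ via monotonicity of rejection together with the closure property of \ConN{}. The only difference is cosmetic---you apply the closure property to the final $\Phi_n$ and push acceptance back to the iteration where the singleton machine was processed, whereas the paper derives the contradiction at the later of the two iterations---but the ingredients are identical.
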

\begin{proof}
  Since $\{\phi_n\}$ is quickly computable, there is a prefix $w_\phi$ such that $U(w_\phi x)$ outputs $\phi_n$ in polynomial time if $x$ encodes $n$. We can chose $w_\phi$ such that $x$ encodes $n$ in $2 \log_2(n)$ bits. Therefore, if $x$ is an infinite uniform random bitstring, $\U(w_\phi x)$ will output $\phi_n$ with probability at least $2^{-2\log_2(n)} = n^{-2}$.

Similarly, there exists a finite bitstring $w_\phi^\prime$ such that if $x$ is a uniform random bitstring, $\U(w_\phi^\prime x)$ outputs the single sentence $\neg\phi_n$ in polynomial time with probability at least $n^{-2}$. If $w_\phi$ and $w_\phi^\prime$ are each at most $k$ bits long, the probability that $w_\phi x$ and $w_\phi^\prime x$ are prefixes for some $b_i, b_j \in \{ b_1, b_2, \ldots b_{2^n} \}$ is at least $1-(1-2^{1-k} n^{-2})^{2^n}$. This converges to 1 as $n$ goes to $\infty$. Therefore, with probability converging to one, there exist $b_i$ and $b_j$ with $i,j\leq 2^n$ such that $U_{2^n}(b_i)$ is the singleton containing $\phi_n$ and $U_{2^n}(b_j)$ is the singleton containing $\neg\phi_n$. Without loss of generality, assume $i<j$. We want to show that if both of these sentences are sampled, exactly one of $\phi_n$ or $\neg \phi_n$ is in $\Phi_n$.

Clearly, the set $\{\phi_n, \neg \phi_n\}$ and all its supersets will be rejected by $\ConN{}$, so $\phi_n$ $\neg \phi_n$ are not both in $\Phi_n$. Let $\Phi_n^k$ be the value of $\Phi$ in \ClaimsN{$b_1$, $b_2$, $\ldots$} after $k$ iterations of the for loop. Assume that neither $\phi_n$ nor $\neg \phi_n$ are in $\Phi_n$. Thus, $\ConN{}$ rejects $\Phi_n^i\cup\{\phi_n\}$ and $\Phi_n^j\cup\{\neg\phi_n\}$. Adding more sentences to the input of $\ConN{}$ cannot cause it to accept, so $\ConN{}$ rejects $\Phi_n^j\cup\{\phi_n\}$ as well. Thus, $\ConN{}$ accepts $\Phi_n^i$ but rejects $\Phi_n^j\cup\{\phi_n\}$ and $\Phi_n^j\cup\{\neg\phi_n\}$. This is a contradiction, so as $n$ goes to $\infty$, with probability approaching 1, exactly one of $\phi_n$ or $\neg \phi_n$ is in $\Phi_n$.

\end{proof}

\begin{lem} \label{lem:bot0}
$\lim_{n\rightarrow\infty} M^*_n(\bot)=0.$
\end{lem}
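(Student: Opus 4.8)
The plan is to prove the slightly stronger statement that $M^*_n(\bot)=0$ for all sufficiently large $n$, which trivially yields the claimed limit. The starting observation is that $\ClaimsN{}$ outputs exactly the sentences that remain in its working set $\Phi$ at the end of the run, so $M^*_n(\bot)$ is precisely the probability that $\bot\in\Phi_n$. It therefore suffices to show that $\bot$ can never be placed into $\Phi$ once $n$ is large enough.

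First I would handle the initialization: $\Phi$ begins as the first $n$ axioms of $\T$, and since $\T$ (say $\PA$) is consistent, $\bot$ is not among its axioms, so $\bot\notin\Phi$ at the start. After that, $\Phi$ only grows by unioning in a set $S=\U_{2^n}(b_i)$ when the guard $\ConN{}$, applied to $\Phi\cup S$, returns true. If $\bot\in S$, then $\bot\in\Phi\cup S$, and there is a proof of inconsistency of $\Phi\cup S$ of some fixed length $c$ that does not depend on $n$ (simply exhibit $\bot$ as a member of the set). Hence for every $n$ with $2^n\ge c$, the first test inside $\ConN{}$---whether a proof of length at most $2^n$ witnesses inconsistency---fires, so the guard returns false and $S$ is discarded. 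Thus $\bot$ is never added, and $\bot\notin\Phi_n$.

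Combining these, $M^*_n(\bot)=\PP(\bot\in\Phi_n)=0$ for all $n\ge\log_2 c$, and so $\lim_{n\to\infty}M^*_n(\bot)=0$. I expect no real obstacle here: the only things to check are the bookkeeping facts that $\ClaimsN{}$ outputs precisely the members of its final $\Phi$, that the axioms of $\T$ do not literally contain $\bot$, and that the canonical proof of inconsistency of a set containing $\bot$ has length bounded independently of $n$---each of which is immediate from the definitions in \Alg{mstar}.
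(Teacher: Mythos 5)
Your proof is correct, and it takes a genuinely different route from the paper's. The paper derives this lemma in two lines from results it has already established: Lemma~\ref{lem:main} applied to the constant (hence quickly computable) sequence of theorems $\{\lnot\bot\}$ gives $M^*_n(\lnot\bot)\to 1$, and Lemma~\ref{lem:inverse} applied to the constant sequence $\{\bot\}$ gives $M^*_n(\bot)+M^*_n(\lnot\bot)\to 1$; subtracting yields the claim. You instead argue directly about \Alg{mstar}: $\bot$ is not among the axioms of the (consistent) theory \T, and any sampled set $S$ containing $\bot$ makes $\Phi\cup S$ refutable by a constant-length proof, so the first test in \ConN{} rejects it once $2^n$ exceeds that constant; hence $\bot\notin\Phi_n$ deterministically and $M^*_n(\bot)=0$ exactly, for all sufficiently large $n$. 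This is sound (the only hypotheses you need, consistency of \T and the constant-length refutation from a set containing $\bot$, are unproblematic) and it proves something strictly stronger than the limit statement, without invoking any of the probabilistic machinery about sampled prefixes. What the paper's route buys is economy and modularity: Lemmas~\ref{lem:main} and~\ref{lem:inverse} are needed anyway for the other properties of inductive coherence, so property~1 falls out for free, and that argument would survive changes to the algorithm's internals as long as those two lemmas still hold, whereas yours is tied to the specific initialization and consistency check of \Alg{mstar}.
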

\begin{proof}
From Lemma \ref{lem:main}, we have $\lim_{n\rightarrow\infty} M^*_n(\neg\bot)=1.$ From Lemma \ref{lem:inverse}, we have $\lim_{n\rightarrow\infty} M^*_n(\bot)+M^*_n(\neg\bot)=1.$ Together, this gives ${\lim_{n\rightarrow\infty} M^*_n(\bot)=0}.$
\end{proof}

\begin{lem} \label{lem:converges}
$\lim_{n\rightarrow\infty} M^*_n(\phi_n)$ converges whenever $\{\phi_n\}$ is quickly computable and $\phi_n\rightarrow \phi_{n+1}$ for all $n$.
\end{lem}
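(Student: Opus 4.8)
The plan is to show the limit exists by identifying it explicitly. Set $p \coloneqq \lim_{n\to\infty} P^*(\phi_n)$; this limit exists because $\phi_n\to\phi_{n+1}$ is a theorem of \T and $P^*$ is coherent, so $P^*(\phi_n)\le P^*(\phi_{n+1})$ by the standard consequence $P(\phi\to\psi)=1\implies P(\psi)\ge P(\phi)$, making $\{P^*(\phi_n)\}$ a bounded nondecreasing sequence. (Equivalently, since \TotalT is almost surely a complete consistent extension of \T, it contains each theorem $\phi_n\to\phi_{n+1}$ and is deductively closed, so the events $\{\phi_n\in\TotalT\}$ are nested up to a null set and $p$ is the probability of their union.) I would then prove $M^*_n(\phi_n)\to p$.

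Both the lower and upper bounds reduce to a single \emph{locking} claim. By Lemma~\ref{lem:inverse} --- and crucially by the machines exhibited in its proof: singleton machines outputting $\{\phi_n\}$ and $\{\neg\phi_n\}$, each of prefix length $O(\log n)$ and hence sampled among $b_1,\dots,b_{2^n}$ at positions of size polynomial in $n$ and run to completion with probability tending to one --- we know that with probability tending to one exactly one of $\phi_n,\neg\phi_n$ lies in $\Phi_n$. It therefore suffices to show that the survivor agrees with \TotalT:
\[
\PP\!\left(\phi_n\in\Phi_n \ \Longleftrightarrow\ \phi_n\in\TotalT\right)\ \longrightarrow\ 1,
\]
for then $M^*_n(\phi_n)=\PP(\phi_n\in\Phi_n)=\PP(\phi_n\in\TotalT)+o(1)=P^*(\phi_n)+o(1)\to p$.

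I would prove this locking claim by rerunning the argument of Lemma~\ref{lem:main}. Fix $\varepsilon>0$. As in that proof, choosing $N_0$ large pins down a probability-$(1-\varepsilon)$ event on which, for all $n\ge N_0$: the short-prefix machines relevant to deciding $\phi_n$ --- the singletons $\{\phi_n\},\{\neg\phi_n\}$ and, for each fixed $j$, the $O(\log j)$-prefix machine emitting the tail $\phi_j,\phi_{j+1},\dots$ --- appear early among $b_1,\dots,b_{2^n}$ and complete inside $2^n$ steps; and the $O(1)$ many proofs of inconsistency that could ever be needed to reproduce \TotalT's verdict on $\phi_n$ have length below $2^{N_0}$, so $\ConN{}$ returns the genuine consistency verdict on the relevant sets (using, exactly as in Lemma~\ref{lem:main}, that only finitely many proofs of inconsistency are needed). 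The point that makes this uniform in $n$ --- despite the shortest proofs of $\phi_i\to\phi_n$ potentially growing faster than any computable function --- is the closure property of $\ConN{}$: once whichever of $\phi_n,\neg\phi_n$ belongs to \TotalT has been added to $\Phi$, its negation is rejected by the two-sentence contradiction $\{\phi_n,\neg\phi_n\}$, a proof of essentially constant length, far inside the budget $2^n$; so the long entailment proofs are never exercised. On the intersection of these high-probability events the finite process's decision about $\phi_n$ coincides with \TotalT's, which is the locking claim, and the lemma follows.

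The step I expect to be the main obstacle is exactly this uniformity/race bookkeeping: one must track a competition among several families of machines (the singletons for $\phi_n$ and $\neg\phi_n$, and the tail machines) whose positions and runtimes all scale polynomially with their indices, and show that a single threshold $n$ --- depending only on $\varepsilon$ and not on the far tail of the sequence --- makes the resources $2^n$ suffice to reproduce \TotalT's verdict with probability at least $1-O(\varepsilon)$. The reason it goes through, rather than the unbounded entailment proofs being an obstruction, is that those proofs never matter: the decision about $\phi_n$ is forced by short-prefix machines at polynomial positions together with constant-length contradictions, so what genuinely needs controlling is only the vanishing-probability tail of ``early'' machines whose internal inconsistencies could pollute $\Phi_n$ --- the same phenomenon already handled in Lemma~\ref{lem:main}.
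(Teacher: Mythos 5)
Your reduction is sound as far as it goes: the limit is indeed $p=\lim_{n}P^*(\phi_n)$, and granting your locking claim $\PP(\phi_n\in\Phi_n\Leftrightarrow\phi_n\in\TotalT)\to 1$, the lemma follows since $|M^*_n(\phi_n)-P^*(\phi_n)|\le\PP(\{\phi_n\in\Phi_n\}\,\triangle\,\{\phi_n\in\TotalT\})$. The mechanism you identify is also the right one: the verdict must be forced early by a tail machine, after which the losing singleton is killed by the two-sentence contradiction, so the long proofs of $\phi_i\to\phi_n$ are never exercised. The gap is the uniformity you assert in the locking claim and flag as ``bookkeeping'' but never discharge. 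The tail machine that locks in \TotalT's verdict on $\phi_n$ is the one starting at the random switch index $K=\min\{m:\phi_m\in\TotalT\}$ (possibly $\infty$), and $K$ is unbounded: there is no single probability-$(1-\varepsilon)$ event on which, ``for each fixed $j$,'' the $j$-th tail machine is sampled among $b_1,\dots,b_{2^n}$ and completes in time --- that holds for each fixed $j$ only for $n$ beyond a $j$-dependent threshold, and likewise the finitely-many-inconsistency-proofs argument for the early positions depends on which tail machine is in play. Your stated reason that the uniformity goes through (``the long entailment proofs never matter'') answers a different obstruction; it does not bound the index of the machine that has to win the race.

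The missing step is exactly the device the paper's proof supplies: partition by the switch index. Define $\phi^k_n=\phi_n$ for $n\ge k$ and $\phi^k_n=\neg\phi_n$ otherwise (with $k=\infty$ the all-negated case), condition on the events $E_k=\{\forall i\;\phi^k_i\in\TotalT\}$, which are quickly-computable-indexed and exhaust the space up to a null set, prove $\PP(\phi^k_n\in\Phi_n\mid E_k)\to 1$ for each fixed $k$ by rerunning the \Lem{main} argument (handling $k=\infty$ via \Lem{inverse}, as you do), and then pass to the unconditional statement because a weighted average, with fixed weights summing to one, of bounded convergent sequences converges --- equivalently, in your phrasing, truncate to $k\le k_0$ with $\PP(k_0<K<\infty)<\varepsilon$ and treat $K=\infty$ separately. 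With that insertion your argument becomes a correct repackaging of the paper's proof (you make the limit $\PP(K<\infty)$ explicit where the paper leaves it as a sum over $k$); without it, the claim that a single threshold depending only on $\varepsilon$ reproduces \TotalT's verdict is unsupported.
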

\begin{proof}
  Because each $\phi_n$ implies $\phi_{n+1}$, in every complete consistent extension of $\T$, either there is some greatest index $k$ such that $\phi_k$ is false, or $\phi_n$ is always false. For $k \in \{0,1,\ldots,\infty\}$, let $\{\phi_n^k\}$ be the sequence given by $\phi_n^k=\phi_n$ if $n\geq k$ and $\phi_n^k=\neg\phi_n$ otherwise. Note that for each $k$, $\{\phi_n^k\}$ is quickly computable, and that with probability~1, there exists exactly one $k$ such that $\phi_i^k\in \TotalT$ for all $i$. Note that
$$M^*_n(\phi_n)=\sum_{k}\mathbb{P}(\phi_n\in \Phi_n\mid\forall i\,\phi_i^k\in \TotalT)\mathbb{P}(\forall i\,\phi_i^k\in \TotalT),$$ so it suffices to show that $\mathbb{P}(\phi_n \in \Phi_n \mid \forall i\,\phi_i^k\in \TotalT)$ converges for each $k$, because a weighted average of bounded sequences each of which converge also converges.

Fix a $k \in \{0,1,\ldots,\infty\}.$ Let $p=\mathbb{P}(\forall i\,\phi_i^k\in \TotalT)$ and let $\varepsilon>0$. Using the same approach as in the proof of \Lem{main}, we can choose $N_1$ such that, for $n > N_1$, with probability at least $1 - p\varepsilon$, we have  $( \forall i\,\phi_i^k\in \TotalT ) \rightarrow \phi_n^k\in \Phi_n$. The probability of the conjunction $P(\phi_n^k \in \Phi_n \land \forall i\,\phi_i^k\in \TotalT)$ is then at least $p - p\varepsilon$. Therefore, ${\mathbb{P}(\phi_n^k\in \Phi_n\mid\forall i\,\phi_i^k\in \TotalT)\geq 1-\varepsilon}$ for all $n\geq N_1$, so $$\lim_{n\rightarrow\infty} \mathbb{P}(\phi_n^k\in \Phi_n\mid\forall i\,\phi_i^k\in \TotalT)$$ converges. For $k<\infty$, ${\lim_{n\rightarrow\infty} \mathbb{P}(\phi_n\in \Phi_n\mid\forall i\,\phi_i^k\in \TotalT)}$ converges, because $\{\phi_n^k\}$ is eventually just $\{\phi_n\}$.
For $k=\infty$, it also converges, because $\phi_n^k = \lnot \phi_n$, and as seen in the proof of \Lem{inverse}, with probability converging to 1, exactly one of $\phi_n$ and $\neg\phi_n$ is in $\Phi_n$.
\end{proof}

\begin{lem} \label{lem:toP}
$\lim_{n\rightarrow\infty} M^*_n(\phi_n)+M^*_n(\psi_n)+M^*_n(\chi_n)=1$ whenever $\{\phi_n\}$, $\{\psi_n\}$, and $\{\chi_n\}$ are quickly computable, and for all $n$, $\{\phi_n,\psi_n,\chi_n\}$ is a partition of truth.
\end{lem}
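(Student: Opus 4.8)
The plan is to pass from the probabilities $M^*_n(\cdot)$ to the random set $\Phi_n$ and show that, with probability tending to $1$ as $n\to\infty$, \emph{exactly one} of $\phi_n,\psi_n,\chi_n$ lies in $\Phi_n$. Granting this, recall that $M^*_n(\theta)=\mathbb{P}(\theta\in\Phi_n)$ when the $b_i$ are uniform, so $M^*_n(\phi_n)+M^*_n(\psi_n)+M^*_n(\chi_n)$ is the expectation of the random count of how many of $\phi_n,\psi_n,\chi_n$ belong to $\Phi_n$; this count is bounded by $3$ and equals $1$ with probability $\to 1$, so its expectation converges to $1$. Thus the whole lemma reduces to the ``exactly one'' claim, and the point is to establish it using only propositional-level reasoning about the outputs of the pure machines that \Lem{main} and \Lem{inverse} provide, never appealing to the (possibly superexponentially long) proof of the partition property itself.

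The inputs are obtained by noticing which theorem sequences are hidden in the partition hypothesis. Since $\{\phi_n,\psi_n,\chi_n\}$ is provably a partition of truth, each of $\{\neg\phi_n\vee\neg\psi_n\}$, $\{\neg\phi_n\vee\neg\chi_n\}$, $\{\neg\psi_n\vee\neg\chi_n\}$ and $\{\phi_n\vee\psi_n\vee\chi_n\}$ is a quickly computable sequence of theorems of \T, so by \Lem{main} each of these four sentences lies in $\Phi_n$ with probability $\to 1$. Applying \Lem{inverse} to $\{\phi_n\}$, $\{\psi_n\}$ and $\{\chi_n\}$, with probability $\to 1$ the set $\Phi_n$ ``decides'' each of the three, i.e.\ contains exactly one of $\phi_n,\neg\phi_n$, and likewise for $\psi_n$ and $\chi_n$. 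Condition on the intersection of these finitely many events; it still has probability $\to 1$. On this event every set that \ClaimsN{} ever assigns to $\Phi$ has passed a \ConN{} test, and since adding sentences never makes \ConN{} accept, no such set — in particular $\Phi_n$ — can contain a propositionally unsatisfiable subset whose refutation has length at most $2^n$.

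Now the contradictions are immediate. If both $\phi_n$ and $\psi_n$ were in $\Phi_n$ then so is $\neg\phi_n\vee\neg\psi_n$, and $\{\phi_n,\psi_n,\neg\phi_n\vee\neg\psi_n\}$ is propositionally unsatisfiable with a refutation of length polynomial in $|\phi_n|+|\psi_n|$, hence at most $2^n$ for all large $n$ (quick computability forces these sentences to have polynomial length), contradicting the previous paragraph; the symmetric arguments rule out $\{\phi_n,\chi_n\}$ and $\{\psi_n,\chi_n\}$, so at most one of the three is in $\Phi_n$. If none of them were in $\Phi_n$, then by decidedness $\neg\phi_n,\neg\psi_n,\neg\chi_n\in\Phi_n$, and together with $\phi_n\vee\psi_n\vee\chi_n\in\Phi_n$ we again get a propositionally unsatisfiable subset with a short refutation — contradiction. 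Hence exactly one of $\phi_n,\psi_n,\chi_n$ is in $\Phi_n$, which is what we needed.

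I expect the only delicate part to be the interface with \ConN{}: verifying that the refutations above are genuinely short (polynomial in the sizes of the sentences, themselves polynomial in $n$, so comfortably below $2^n$ for large $n$) and that the sentences that \Lem{main} and \Lem{inverse} place into $\Phi_n$ are \emph{syntactically} the disjunctions and negations appearing in those refutations, so that the propositional argument literally applies. This is bookkeeping rather than a new idea; the conceptual content is entirely the observation that exclusivity and exhaustiveness of a provable partition are themselves quickly computable theorem sequences, which collapses the statement onto \Lem{main} and \Lem{inverse}.
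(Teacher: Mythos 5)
Your proposal is correct and follows essentially the same route as the paper: reduce to showing that exactly one of $\phi_n,\psi_n,\chi_n$ lies in $\Phi_n$ with probability tending to $1$, using \Lem{main} on quickly computable theorem sequences encoding the partition, \Lem{inverse} for decidedness of each sentence, and the fact that a set accepted by \ConN{} admits no short refutation. The only (cosmetic) difference is that you feed \Lem{main} the four sentences expressing pairwise exclusivity and exhaustiveness, whereas the paper uses the single ``exactly one of the three holds'' formula.
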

\begin{proof}
First, observe that $$\{(\phi_n\wedge\neg\psi_n\wedge\neg \chi_n)\vee(\neg\phi_n\wedge\psi_n\wedge\neg \chi_n)\vee(\neg\phi_n\wedge\neg\psi_n\wedge\chi_n)\}$$
is a quickly computable sequence of theorems. By \Lem{main}, each of these sentences must be in $\Phi_n$ once $n$ is large.

As seen in the proof of \Lem{inverse}, when $n$ is large we also have that with probability~1, exactly one of $\phi_n$ and $\neg\phi_n$ is in $\Phi_n$, exactly one of $\psi_n$ and $\neg\psi_n$ is in $\Phi_n$, and exactly one of $\chi_n$ and $\neg\chi_n$ is in $\Phi_n$.

Since $\Phi_n$ contains no set of sentences from which one can prove a contradiction in fewer than $2^n$ steps, this means that $\Phi_n$ must eventually contain exactly one of $\phi_n$, $\psi_n$, and $\chi_n$.
Therefore, $$\lim_{n\rightarrow\infty} \mathbb{P}(\phi_n\in \Phi_n)+ \mathbb{P}(\psi_n\in \Phi_n)+ \mathbb{P}(\chi_n\in \Phi_n)=1.$$
\end{proof}

\begin{lem}
$\lim_{n\rightarrow\infty} M^*_n(\phi)=P^*(\phi).$
\end{lem}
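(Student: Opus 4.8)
The plan is to work with the coupling set up at the start of this appendix: a single uniformly random sequence $b_1, b_2, \ldots$ of infinite bitstrings, the extension $\TotalT$ that \Alg{Complete} converges to on it, and the set $\Phi_n = \ClaimsN{$b_1$, $b_2$, $\ldots$}$, which reads only the first $2^n$ bits of $b_1, \ldots, b_{2^n}$. Under this coupling $M^*_n(\phi) = \PP(\phi \in \Phi_n)$ and $P^*(\phi) = \PP(\phi \in \TotalT)$, so it suffices to show $\PP(\phi \in \Phi_n) \to \PP(\phi \in \TotalT)$. I would first record three facts about the test \ConN{}, all by induction on the recursion defining it: it is \emph{sound}, declaring a set inconsistent only when it genuinely is; conversely it declares every consistent set consistent; and it declares a set inconsistent whenever that set has a refutation of length at most $2^n$ whose premises all belong to it. I would also use that \ClaimsN{} only ever adds sentences to its working set.

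The core of the argument is a faithful-simulation claim. Since every finite bitstring is a prefix of some $b_i$ with probability one and \Alg{Complete}'s set stays consistent, almost surely: for every sentence $\psi$ with $\psi \in \TotalT$ there is a finite stage $j$ at which $\psi$ enters $\TotalT$, as part of $\U(b_j)$. Fix such $\psi$ and $j$. By the same argument as in the proof of \Lem{main} --- only finitely many potential inconsistencies are in play, each with a finite refutation --- there is an $N$ such that for all $n \ge N$: the test \ConN{} agrees with the true consistency predicate on every set consisting of the first $n$ axioms of \T together with $\bigcup_{i \in S} \U_{2^n}(b_i)$ for $S \subseteq \{1, \ldots, j\}$; and every output of $b_1, \ldots, b_j$ that figures in those refutations, as well as $\psi$ itself, has appeared by time $2^n$. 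A short induction on $\ell = 0, 1, \ldots, j$ then shows that for $n \ge N$, \ClaimsN{$b_1$, $\ldots$, $b_{2^n}$} admits exactly the same machines among $b_1, \ldots, b_\ell$ as \Alg{Complete}: when \Alg{Complete} admits $b_\ell$, the induction hypothesis makes \ClaimsN{}'s working set a subset of \Alg{Complete}'s current (consistent) set, so it is consistent with $\U_{2^n}(b_\ell)$ and \ConN{} accepts; when \Alg{Complete} rejects $b_\ell$, the relevant refutation is entirely present, so \ConN{} rejects. Taking $\ell = j$ gives $\psi \in \U_{2^n}(b_j) \subseteq \Phi_n$, and since \ClaimsN{} never removes sentences, $\psi \in \Phi_n$ for all $n \ge N$.

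Hence, almost surely, $\psi \in \TotalT$ implies $\psi \in \Phi_n$ for all large $n$, so $\liminf_n \mathbf{1}\{\psi \in \Phi_n\} \ge \mathbf{1}\{\psi \in \TotalT\}$ a.s., and Fatou's lemma gives $\liminf_n M^*_n(\psi) \ge P^*(\psi)$ for every sentence $\psi$. Applying this with $\psi = \phi$ and with $\psi = \neg\phi$, and using \Lem{inverse} (so that $M^*_n(\phi) + M^*_n(\neg\phi) \to 1$) and the coherence of $P^*$ (so that $P^*(\neg\phi) = 1 - P^*(\phi)$, since $\TotalT$ is almost surely a complete consistent extension of \T), I would conclude that $\limsup_n M^*_n(\phi) = 1 - \liminf_n M^*_n(\neg\phi) \le 1 - P^*(\neg\phi) = P^*(\phi) \le \liminf_n M^*_n(\phi)$, forcing $\lim_n M^*_n(\phi) = P^*(\phi)$.

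The main obstacle I anticipate is making the faithful-simulation induction airtight. The delicate point is that the induction hypothesis ``\ClaimsN{} has admitted \emph{exactly} \Alg{Complete}'s machines so far'' is what keeps \ClaimsN{}'s working set simultaneously large enough --- it contains the finitely many premises of the refutations that matter, once $n$ is large --- and small enough --- it is a subset of the genuinely consistent set \Alg{Complete} holds, so the soundness of \ConN{} rules out a spurious rejection; both directions of the \ConN{} characterization are needed. One should also take the usual mild care over the measure-theoretic status of $\TotalT$ and the probability-one statements about it, which are inherited from the discussion preceding \Alg{mstar}.
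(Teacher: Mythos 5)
Your proof is correct, and its overall shape---lower-bound the mass that $M^*_n$ puts on both $\phi$ and $\neg\phi$, then squeeze using the fact that it cannot believe both---matches the paper's. The difference is in how the lower bound is obtained. The paper simply reuses \Lem{converges} with the constant sequence $\phi_n \coloneqq \phi$, taking $k=0$ and $k=\infty$, to get $\mathbb{P}(\phi\in\Phi_n\mid\phi\in\TotalT)\ge 1-\varepsilon$ and $\mathbb{P}(\neg\phi\in\Phi_n\mid\neg\phi\in\TotalT)\ge 1-\varepsilon$ for large $n$, and closes the squeeze with the direct observation that $\phi$ and $\neg\phi$ cannot both lie in $\Phi_n$ once $2^n$ exceeds the length of the obvious refutation. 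You instead prove a pointwise statement---almost surely, \ClaimsN{} eventually makes exactly the same accept/reject decisions as \Alg{Complete} on any fixed prefix $b_1,\dots,b_j$, so any fixed sentence of $\TotalT$ is eventually in $\Phi_n$---convert it to $\liminf_n M^*_n(\psi)\ge P^*(\psi)$ by Fatou, and close the squeeze with \Lem{inverse} and coherence of $P^*$. The underlying estimate is the same ``only finitely many refutations matter, so take $n$ large'' technique as in \Lem{main}, so your route re-derives work the paper delegates to \Lem{converges}; what it buys is a stronger and arguably cleaner intermediate fact (almost-sure eventual faithful simulation of \Alg{Complete}, hence eventual inclusion of every sentence of $\TotalT$) and an argument that avoids conditional probabilities. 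Your induction does rest on exactly the two properties of \ConN{} you isolate (it never rejects a genuinely consistent set, and it rejects any set containing all premises of a refutation of length at most $2^n$); both are also used, implicitly, by the paper, so this is not a gap, and the case of $\psi\in\TotalT$ that enters only as an axiom of \T is covered either by your prefix remark or trivially by the seeding of \ClaimsN{} with the first $n$ axioms.
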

\begin{proof}
It suffices to show that for all $\phi$, $$\lim_{n\rightarrow\infty}\mathbb{P}(\phi\in \Phi_n)=\mathbb{P}(\phi\in \TotalT).$$

Note that in the proof of \Lem{converges}, if we take $\{\phi_n\}$ to be the sequence which is constantly $\phi$ and considering $k=0$ and $k=\infty$, we showed that for sufficiently large $n$, ${\mathbb{P}(\phi\in \Phi_n\mid\phi\in \TotalT)}\geq 1-\varepsilon$ and ${\mathbb{P}(\neg\phi\in \Phi_n\mid\neg\phi\in \TotalT)}\geq 1-\varepsilon.$ Since $\phi$ and $\neg\phi$ cannot both be in $\Phi_n$ for sufficiently large $n$, this means that $\left|\mathbb{P}(\phi\in \Phi_n)-\mathbb{P}(\phi\in \TotalT)\right| < \varepsilon$, so $\lim_{n\rightarrow\infty}\mathbb{P}(\phi\in \Phi_n)=\mathbb{P}(\phi\in \TotalT).$
\end{proof}

\section*{Acknowledgments}
This research was supported as part of the Future of Life Institute (futureoflife.org) FLI-RFP-AI1 program, grant~\#2015-144576.

\printbibliography

\end{document}